\newtheorem{theorem}{Theorem}[section]
\newtheorem{lemma}{Lemma}[section]
\newtheorem{definition}{Definition}[section]
\newcommand{\real}{\mathbb R}
\def\st{\text{ s.t. }}
\def\E{\mathcal E}
\DeclareMathOperator*{\sign}{sign}
\def\E{\mathbb{E}}
\def\R{\mathbb{R}}
\newcommand{\betatrue}{\beta^*}
\newcommand{\qtrexname}{q-TREX}
\newcommand{\consttrex}{\phi}
\newcommand{\blind}{0}
\def\swap{\textup{swap}}
\def\Tr{\textup{trace}}
\def\diag{\textup{diag}}
\newcommand{\Ncal}{\mathcal{N}}
\begin{document}

\def\spacingset#1{\renewcommand{\baselinestretch}%
{#1}\small\normalsize} \spacingset{1}


\if0\blind
{
  \title{\bf Non-convex Global Minimization and False Discovery Rate Control for the TREX}
  \author{Jacob Bien\\
    Department of Biological Statistics and Computational Biology\\
    Department of Statistical Science, Cornell University\\
    Irina Gaynanova \\
    Department of Statistics, Texas A\&M University\\
    Johannes Lederer \\
    Departments of Statistics and Biostatistics,\\ University of Washington, Seattle\\
    Christian L. M\"uller \\
    Simons Center for Data Analysis, Simons Foundation
    }
  \maketitle
} \fi

\if1\blind
{
  \bigskip
  \bigskip
  \bigskip
  \begin{center}
    {\LARGE\bf Title}
\end{center}
  \medskip
} \fi

\bigskip
\begin{abstract}
The TREX is a recently introduced method for performing sparse high-dimensional regression.  
Despite its statistical promise as an alternative to the lasso, square-root lasso, and scaled lasso, the TREX is computationally challenging in that it requires solving a non-convex optimization problem.  
This paper shows a remarkable result: despite the non-convexity of the TREX problem, there exists a polynomial-time algorithm that is guaranteed to find the {\bf global minimum}.  This result adds the TREX to a very short list of non-convex optimization problems that can be globally optimized (principal components analysis being a famous example).  After deriving and developing this new approach, we demonstrate that (i) the ability of the preexisting TREX heuristic to reach the global minimum is strongly dependent on the difficulty of the underlying statistical problem, (ii) the new polynomial-time algorithm for TREX permits a novel variable ranking and selection scheme, (iii) this scheme can be incorporated into a rule that controls the false discovery rate (FDR) of included features in the model.  To achieve this last aim, we provide an extension of the results of \citet{Barber2015} to establish that the {\em knockoff filter} framework can be applied to the TREX.  This investigation thus provides both a rare case study of a heuristic for non-convex optimization and a novel way of exploiting non-convexity for statistical inference.  
\end{abstract}

\noindent%
{\it Keywords:} high-dimensional, global optimization, model selection, sparsity, tuning parameter
\vfill

\newpage
\spacingset{1.45}
\section{Introduction}
\label{sec:introduction}

The lasso \citep{tibshirani1996regression} has become a canonical approach to variable selection and predictive modeling in high-dimensional regression settings.  Given a matrix of features $X\in\real^{n\times p}$ and a response vector $Y\in \real^n$, the lasso is based on solving the regularized least-squares problem,
$$
\min_{\beta\in\real^p}\left\{\|Y-X\beta\|^2_2+\lambda\|\beta\|_1\right\},
$$
where $\lambda\ge0$ is a tuning parameter that controls the sparsity of the solution.  When $Y=X\beta^*+\sigma\varepsilon$ with each $\varepsilon_i$ having zero mean and variance $1$, it has been shown that the lasso has strong performance guarantees in terms of support recovery, estimation, and predictive performance if one takes $\lambda\sim \sigma\|X^\top\varepsilon\|_\infty$. To address the problem of  $\sigma$ being typically unknown, \citet{sqrtlasso,scaledlasso} proposed modifications of the lasso objective function. One can view these modifications as scaling the lasso objective function by an estimate of $\sigma,$ see \citet{LedererMueller:14}:
$$
\min_{\beta\in\real^p}\left\{\frac{\|Y-X\beta\|^2_2}{\frac{1}{\sqrt{n}}\|Y-X\beta\|_2}+\gamma\|\beta\|_1\right\}.
$$
In this way, the optimal tuning parameter $\gamma$ does not depend on $\sigma$. However, since $\varepsilon$ and its distribution are also unknown in practice, \citet{LedererMueller:14} proposed the TREX, which takes the above argument one step further.  Recalling that a theoretically desirable tuning parameter for the lasso is $\lambda\sim \sigma\|X^\top\varepsilon\|_\infty$, they propose to scale the lasso objective by an estimate of this quantity:
\begin{align}
\min_{\beta\in\real^p}\left\{\frac{\|Y-X\beta\|_2^2}{\|X^\top(Y-X\beta)\|_\infty}+\consttrex\|\beta\|_1\right\}.
\label{eq:trex}
\end{align}
The parameter $\consttrex\ge0$, they argue, can be thought of as constant ($\consttrex=1/2$ being the standard choice).  They present several promising examples in which TREX, with no tuning of $\consttrex$, can be effectively used as an alternative to the lasso.

There is, however, a major technical difficulty introduced in the TREX formulation.  Unlike the lasso, square-root lasso, and scaled lasso, the TREX is based on a non-convex optimization problem. The left panel of Figure~\ref{fig:convex-contours} shows the contours of the objective function in \eqref{eq:trex} for a simple example in which $p=2$, revealing a complicated, non-differentiable objective surface with multiple local minima.  

Estimators based on non-convex problems can generally not be computed. Hence, one must typically be satisfied with either (a)~a theoretical estimator that is of limited practical value or (b)~a redefinition of the estimator as the output of a particular algorithm chosen to
approximately (or so one hopes) optimize the objective function. It is rare, but fortunate, when a particular non-convex problem of interest can be efficiently solved
(i.e., globally optimized).  Principal component analysis is one of the few 
examples of a non-convex problem where global optimization is
computationally tractable.

The first term of the TREX optimization problem~\eqref{eq:trex} is
non-convex, and therefore, one might expect that one needs to resort to
either (a) or (b) above.  Indeed, \citet{LedererMueller:14} go the latter route by introducing a 
heuristic scheme in which the $\ell_\infty$-norm is replaced by an
$\ell_q$-norm for some large value of $q$ to yield a differentiable,
though still non-convex, loss function (we will refer to this heuristic as \qtrexname~throughout).  In strong contrast, we derive a  remarkable and surprising result; namely, that the TREX problem, although non-convex, is amenable to polynomial-time {\em global optimization}.  The key to our
approach is the observation that problem \eqref{eq:trex} can be
equivalently expressed as the minimum over $2p$ convex problems.  We
present this reduction in Section~\ref{sec:main-proposal}, and the remainder of the paper exploits this reduction in several directions.

It is rarely possible to provide a rigorous empirical evaluation for heuristics of non-convex problems since generally the global minimum is impossible to attain certifiably.  We therefore view the TREX as an interesting case study for non-convex heuristics in general and, in Section~\ref{sec:empirical-qtrex}, we capitalize upon our ability to perform global minimization to provide a detailed look at the performance of the \qtrexname~heuristic.

In particular, we find that the q-TREX heuristic's statistical performance (in terms of estimation error) is in fact similar to that of the global minimizer of the TREX objective, even though the estimates themselves can be quite different.  This observation has important practical implications.  In particular, it provides backing for the use of the q-TREX heuristic.  We observe that the q-TREX heuristic is faster to compute than our algorithm for global optimization, so observing similar statistical performance is encouraging since it suggests that we can use q-TREX on large-scale problems without loss in statistical performance.

In Section~\ref{sec:knockoff}, we show how a slight modification of the results  of \citet{Barber2015} about the  {\em knockoff filter} leads to two procedures for (provably) controlling the FDR of features selected based on the TREX.  Interestingly, the empirically more successful of these procedures exploits information about the $2p$ subproblems to design a novel variable ranking scheme.  Thus, while the main contribution of this paper is centered around optimization and computation, this work has interesting statistical implications as well.


In Section~\ref{sec:empirical-knockoff}, we provide empirical corroboration of our theoretical result that our TREX-based knockoff filter does in fact provide FDR control.
We also apply these new knockoff filters on a large HIV-1 genotype/drug data set and attain promising results.


\section{Main Proposal}
\label{sec:main-proposal}

\subsection{Reduction of TREX Problem to \texorpdfstring{$2p$}~ Convex Problems}
\label{sec:reduction-2p}

It is clear that the main complication with \eqref{eq:trex} and the
source of the non-convexity is the quantity 
$\|X^\top(Y-X\beta)\|_\infty$ in the denominator of the first term.
Observe that we may rewrite \eqref{eq:trex} as follows:
\begin{align*}
  P^*&:=\min_{\beta\in\real^p}\left\{\frac{\|Y-X\beta\|^2_2}{\max_{j\in\{1,\ldots,p\}}\consttrex|x_j^\top(Y-X\beta)|}+\|\beta\|_1\right\}\\
&=\min_{\beta\in\real^p}\min_{j\in\{1,\ldots,p\}}\left\{\frac{\|Y-X\beta\|^2_2}{\consttrex|x_j^\top(Y-X\beta)|}+\|\beta\|_1\right\}.
\end{align*}
The equality above shows that our problem can be viewed as the minimization of a pointwise minimum of $p$
functions.  Such a minimization problem can be alternately expressed
as finding the smallest of the $p$ functions' minima:
$$
P^*=\min_{j\in\{1,\ldots,p\}}P^*_j,
$$
where
$$
 P^*_j=\min_{\beta\in\real^p}\left\{\frac{\|Y-X\beta\|^2_2}{\consttrex|x_j^\top(Y-X\beta)|}+\|\beta\|_1\right\}.
$$
While the above problem is still non-convex, we show that its solution is obtainable by solving two convex optimization problems.  We use the simple fact that if $H_1$ and $H_2$ are subsets of $\real^p$ with $H_1\cup H_2=\real^p$, then
$$
\min_{\beta\in\real^p}g(\beta)=\min\left\{\min_{\beta\in H_1}g(\beta),\min_{\beta\in H_2}g(\beta)\right\}.
$$
Letting $H_1=\{\beta\in\real^p:x_j^\top(Y-X\beta)\ge0\}$ and $H_2=\{\beta\in\real^p:-x_j^\top(Y-X\beta)\ge0\}$, we may write $P_j^*$ as the minimum of two separate minimization problems:
$$
\min_{\beta\in\real^p}\left\{\frac{\|Y-X\beta\|^2_2}{\consttrex x_j^\top(Y-X\beta)}+\|\beta\|_1\st
 x_j^\top(Y-X\beta)\ge0\right\}
$$
and
\small
$$
\min_{\beta\in\real^p}\left\{\frac{\|Y-X\beta\|^2_2}{-\consttrex x_j^\top(Y-X\beta)}+\|\beta\|_1\st
 -x_j^\top(Y-X\beta)\ge0\right\}.
$$
\normalsize
In the above, we have used the fact that $|a|=a$ if $a\ge0$ and $|a|=-a$ if $a\le 0$. 
Both of these problems are of a common form, which can be expressed
in terms of a nonzero vector $v\in\real^n$ as
\small
\begin{align}
P^*(v):= \min_{\beta\in\real^p}\left\{\frac{\|Y-X\beta\|^2_2}{v^\top(Y-X\beta)}+\|\beta\|_1\st
  v^\top(Y-X\beta)\ge0\right\}.
\label{eq:convex}
\end{align}
\normalsize

Since the minimizer of $P_j^*$ must occur in one of these two half-spaces, we have that $P^*_j=\min\{P^*(\consttrex x_j),~P^*(-\consttrex x_j)\},$
and thus, we have shown in this section that 
$$
P^* = \min_{\substack{j\in\{1,\ldots,p\}\\s\in\{\pm 1\}}}P^*(s\consttrex x_j).
$$
The minimizer of \eqref{eq:trex} is therefore provided by the $(j,s)$ pair that attains the above minimization.  See Algorithm \ref{alg:ctrex} for the main algorithm, which we refer to as the {\em c-TREX} (short for convex-TREX). 
\begin{algorithm}[tb]
   \caption{The c-TREX algorithm for globally optimizing the TREX problem \eqref{eq:trex}.}
   \label{alg:ctrex}
\begin{algorithmic}
   \FOR{$j=1$ {\bfseries to} $p$}
   \FOR{$s\in\{-1,~1\}$}
   \STATE Solve the SOCP \eqref{eq:convex} with $v=s\consttrex x_j$ as described in Section \ref{sec:how-solve-each}.
   \STATE Let $\hat\beta(j,s)$ and $P^*(s\consttrex x_j)$ denote the optimal point and value, respectively.
   \ENDFOR
   \ENDFOR
   \STATE Let $(\hat j, \hat s)=\arg\min_{(j,s)} P^*(s\consttrex x_j)$
   \STATE Return $\hat\beta(\hat j, \hat s)$
\end{algorithmic}
\end{algorithm}
In the next section, we show that \eqref{eq:convex} is a convex
optimization problem that can be readily solved, which therefore
implies that we can globally minimize \eqref{eq:trex} by solving $2p$ convex problems.  The left panel of Figure~\ref{fig:convex-contours} shows the contours of the TREX objective in an example where $p=2$.  The right panel of Figure~\ref{fig:convex-contours} shows the decomposition of this non-convex problem into $4$ (i.e., $2p$) separate convex optimization problems.  The lowest of these $4$ minima is the global minimum of \eqref{eq:trex}.


\begin{figure}[t]
    \centering
    \includegraphics[width=0.38\linewidth]{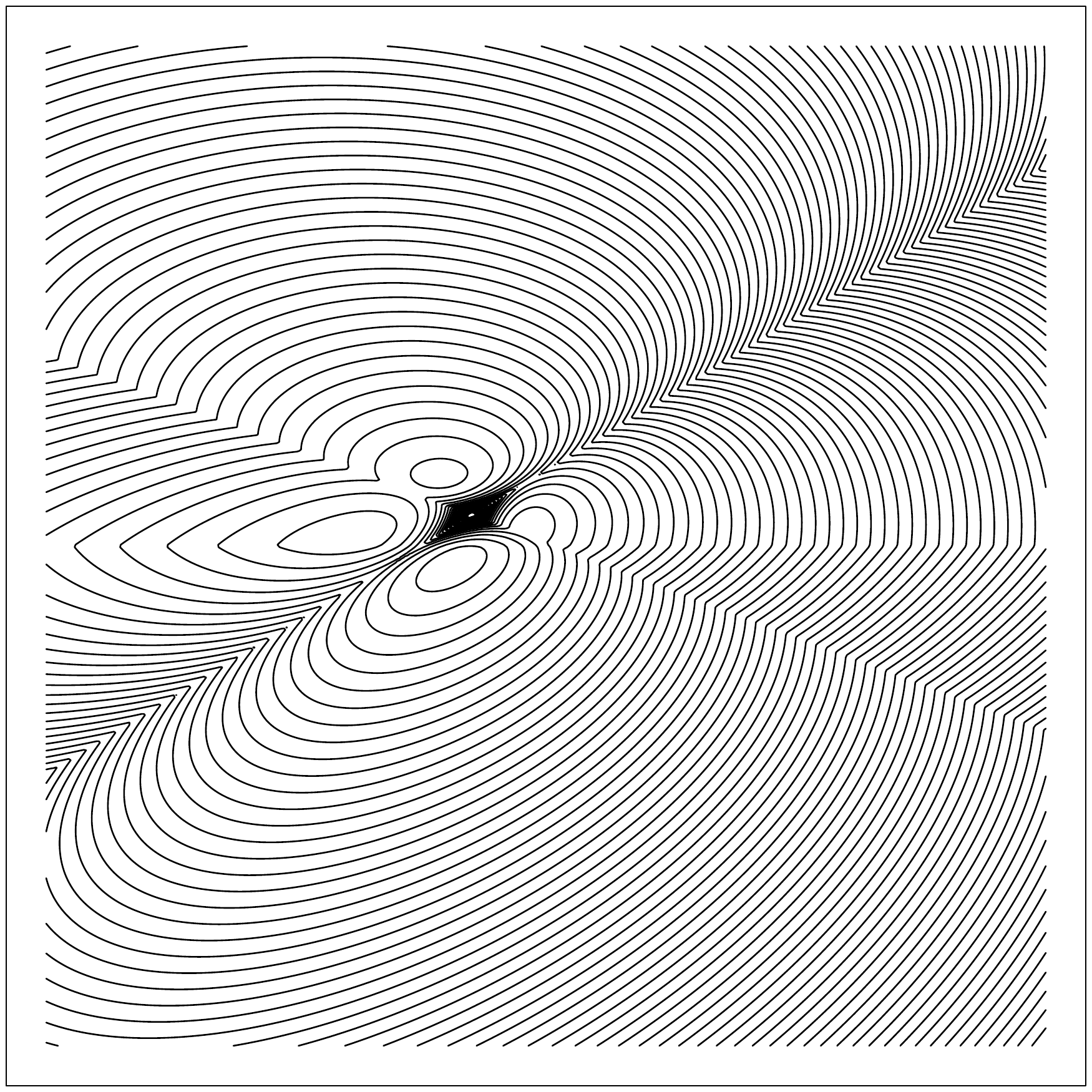}
    \includegraphics[width=0.4\linewidth]{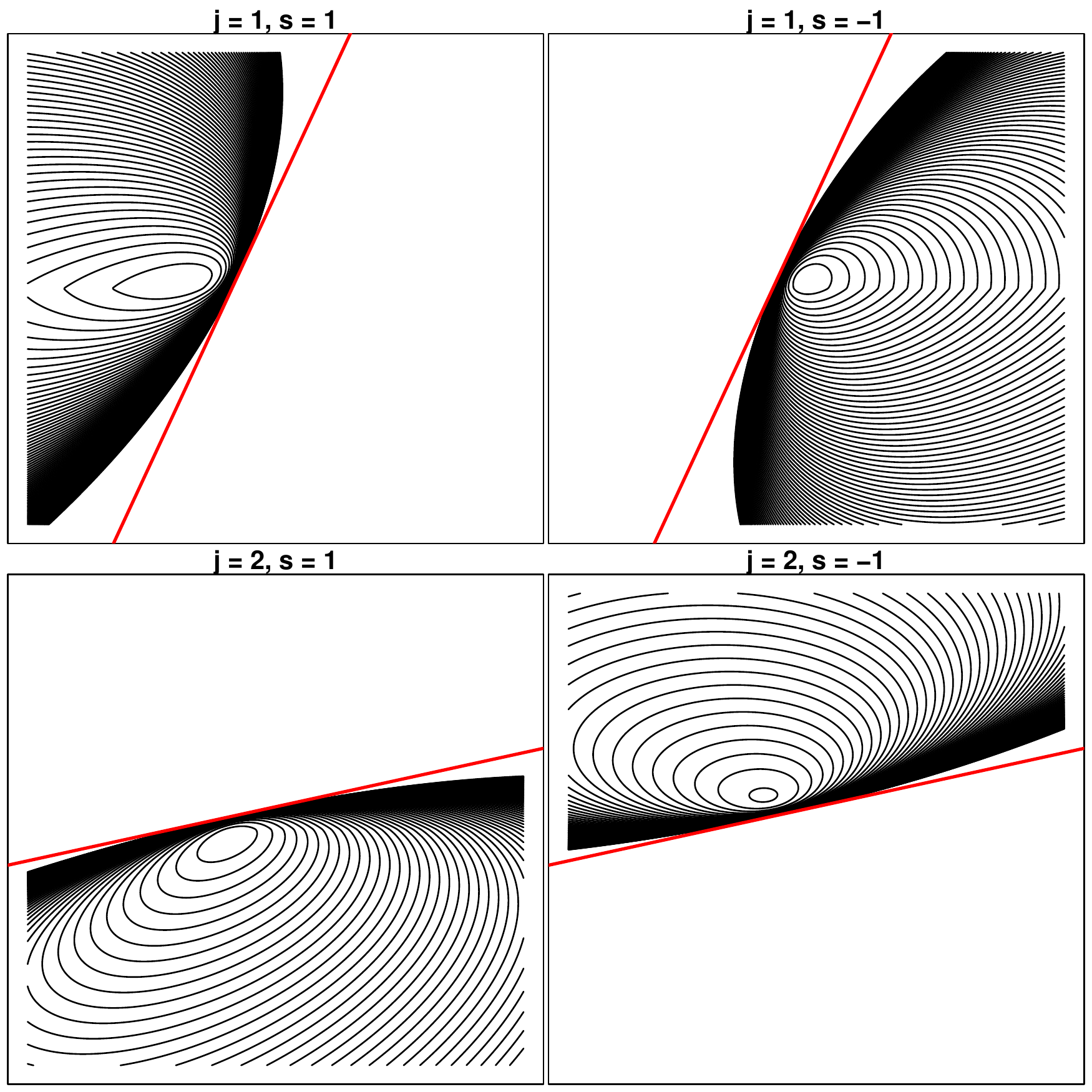}
    \caption{(Left) Contours of the TREX problem \eqref{eq:trex} in an example with $p=2$. (Right) The c-TREX, proposed in this paper, decomposes the TREX into $2p$ convex optimization problems, corresponding to $2p$ half-spaces of $\real^p$. The solution to TREX is the smallest of these $2p$ solutions.}
    \label{fig:convex-contours}
\end{figure}

\subsection{How to Solve Each Convex Problem}
\label{sec:how-solve-each}

The first term in \eqref{eq:convex} can be written as $f(Y-X\beta,v^\top(Y-X\beta))$ where $f(a,b)=a^2/b$, defined on $\real\times(0,\infty)$, is a fairly well-known convex function, sometimes referred to as ``quadratic-over-linear''
\citep{Boyd04}.  Since this term is the composition of a convex function and an affine function, it is therefore convex \citep{Boyd04}.  Following a technique used in \citet{Lobo98}, we can re-express \eqref{eq:convex} as a second-order cone program (SOCP).  We begin by writing \eqref{eq:convex} as
\begin{align*}
  \min_{t_0,\ldots,t_p}\sum_{j=0}^pt_j
  \quad\st &\|Y-X\beta\|^2_2\le t_0 v^\top(Y-X\beta)\\
  &v^\top(Y-X\beta)\ge0\\
&|\beta_j|\le t_j\text{ for }j=1,\ldots,p.
\end{align*}
A few lines of algebra give us a SOCP formulation of \eqref{eq:convex}:
\begin{align*}
  \min_{t\in\real^{p+1},\beta\in\real^p}&\sum_{j=0}^pt_j\\
  \st \Big\|&
    \begin{pmatrix}
2(Y-X\beta)\\v^\top(Y-X\beta)-t_0
    \end{pmatrix}\Big\|_2\le v^\top(Y-X\beta)+t_0\\
&\|e_j^\top\beta\|_2\le t_j\text{ for }j=1,\ldots,p,
\end{align*}
where $e_j\in\real^p$ denotes the $j$th canonical basis vector.
Writing \eqref{eq:convex} in this way not only exhibits it as a convex
optimization problem but also makes it clear how we can solve
\eqref{eq:convex} using existing SOCP solvers.  We consider two solvers in particular: ECOS (Embedded Conic Solver, \citealt{domahidi2013ecos}), an interior-point solver, and SCS (Splitting Conic Solver, \citealt{odonoghue2015conic}), a first-order solver.  In our experience, ECOS produces high-accurate solutions fairly rapidly for small- and mid-sized problems, but does not scale well for large problems.  By contrast, SCS can scale to much larger problem sizes by producing less accurate solutions. In practice, it is sometimes desirable to solve \eqref{eq:trex} along a grid of values of $\consttrex$. In such a case, SCS is convenient since it allows for warm-starting, which can greatly reduce the total amount of computational time. In particular, for each $\consttrex$, we maintain a set of $2p$ solutions to \eqref{eq:convex}, $\hat\beta(s\consttrex x_j)$ for $s\in\{\pm1\}$ and $j\in\{1,\ldots,p\}$.
Since a small modification to $\consttrex$ is not expected to make a big change to $\hat\beta(s \consttrex x_j)$ (for a fixed $s$ and $j$ pair), we can use $\hat\beta(s\tilde \consttrex x_j)$ for some $\tilde  \consttrex\approx \consttrex$ to initialize the solver to get $\hat\beta(s\consttrex x_j)$.



\section{Empirical Study of q-TREX and c-TREX}
\label{sec:empirical-qtrex}

\subsection{Investigating the Heuristic}
\label{sec:investigate}
While heuristic strategies are frequently used to attack non-convex optimization problems, it is rare that one is able to investigate the success of these heuristics.  In machine learning and statistics, it is common to evaluate the resulting predictions of the heuristic and to use that as ``evidence'' of success.  However, a method generating good predictions does not actually say anything about whether the heuristic is in fact successfully solving the original problem.  Another common form of ``evidence'' is for authors to rerun their heuristic with many random starts (leading to different local minima) and to show that most of the time it gets to the smallest observed one.  Again, this is not rigorous evidence of success since a method that consistently ends up in a sub-optimal local minimum will misleadingly look perfect.  A more principled approach that appears in, for example, the combinatorial optimization literature is to prove that the heuristic is guaranteed to get within some approximation ratio of the true solution.

There is therefore typically a disconnect between the motivating optimization problem and the method proposed in practice.  Since theoretical results are typically based on the original optimization problem rather than the heuristic, this disconnect leads to a gap between the ideal method that comes with theoretical guarantees and the practical method that is actually used.

The algorithm presented in this paper therefore presents us with a rare opportunity to investigate the performance of the q-TREX heuristic that was introduced in \citet{LedererMueller:14}.

\begin{figure}[!t] 
  \begin{center}
   \includegraphics[width=0.32\linewidth]{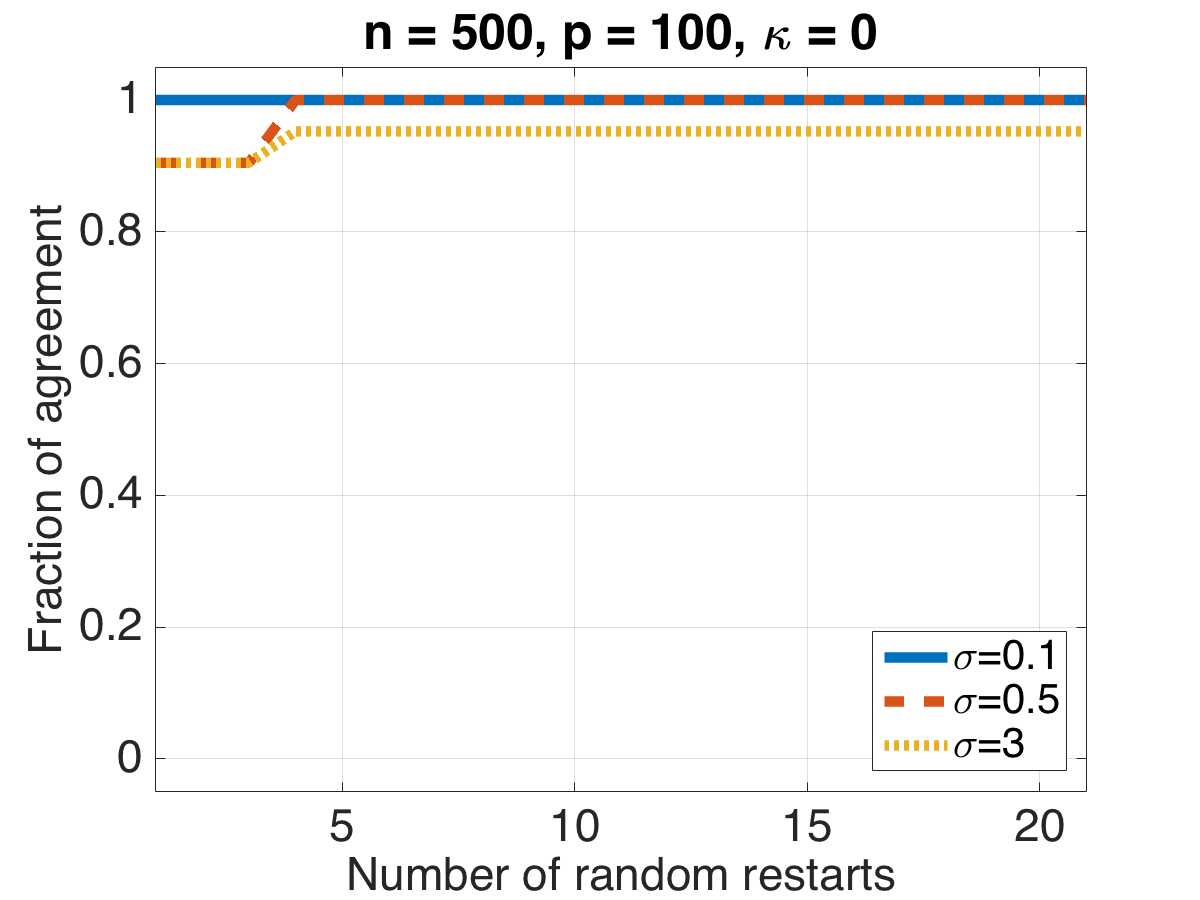}
  \includegraphics[width=0.32\linewidth]{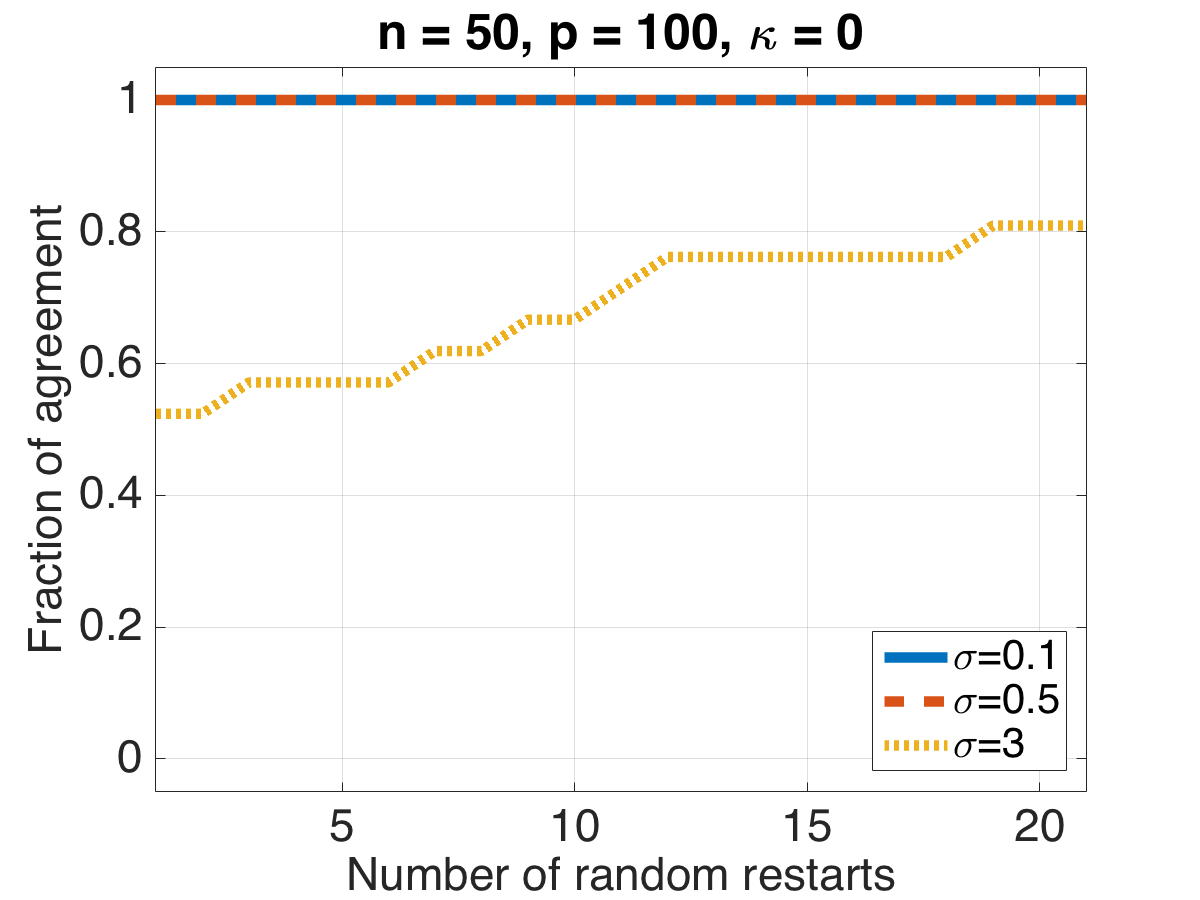}
  \includegraphics[width=0.32\linewidth]{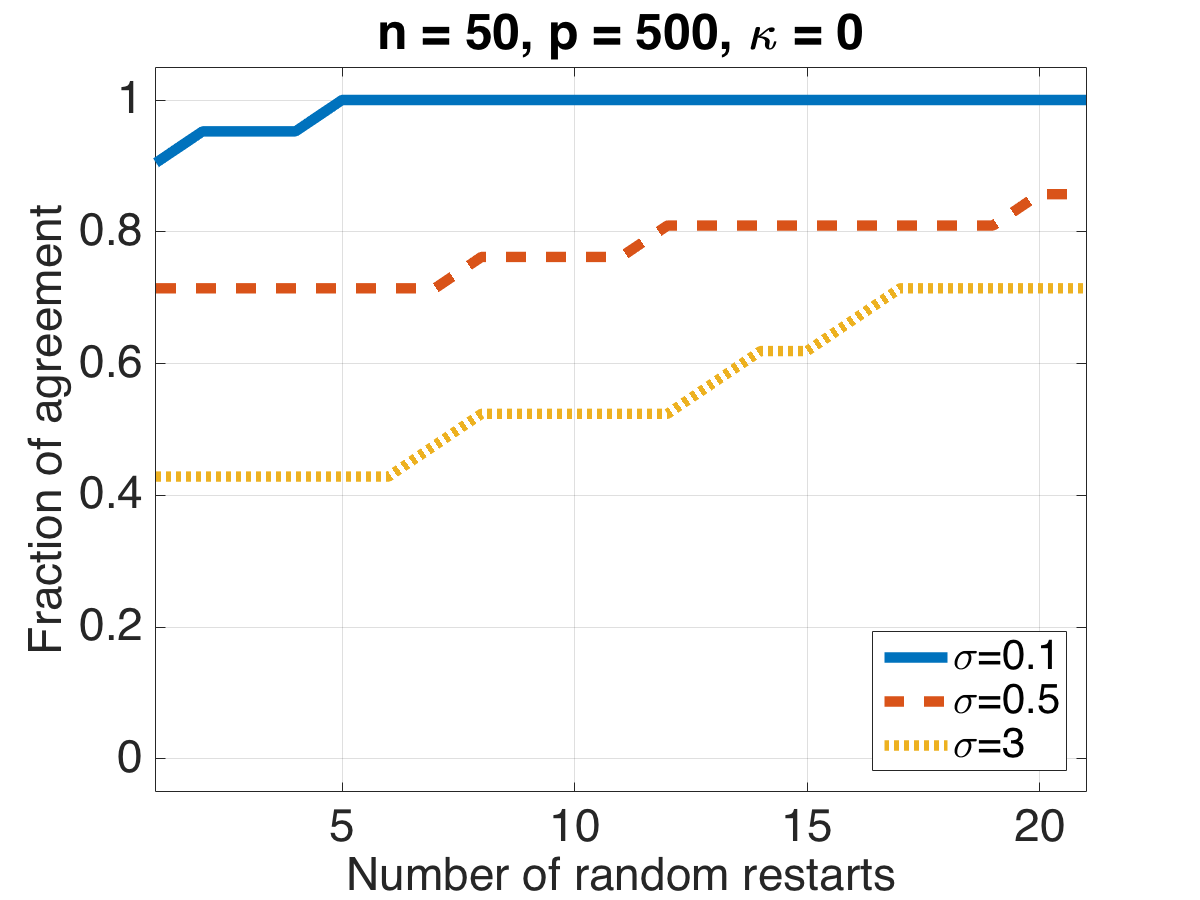}\\
    \includegraphics[width=0.32\linewidth]{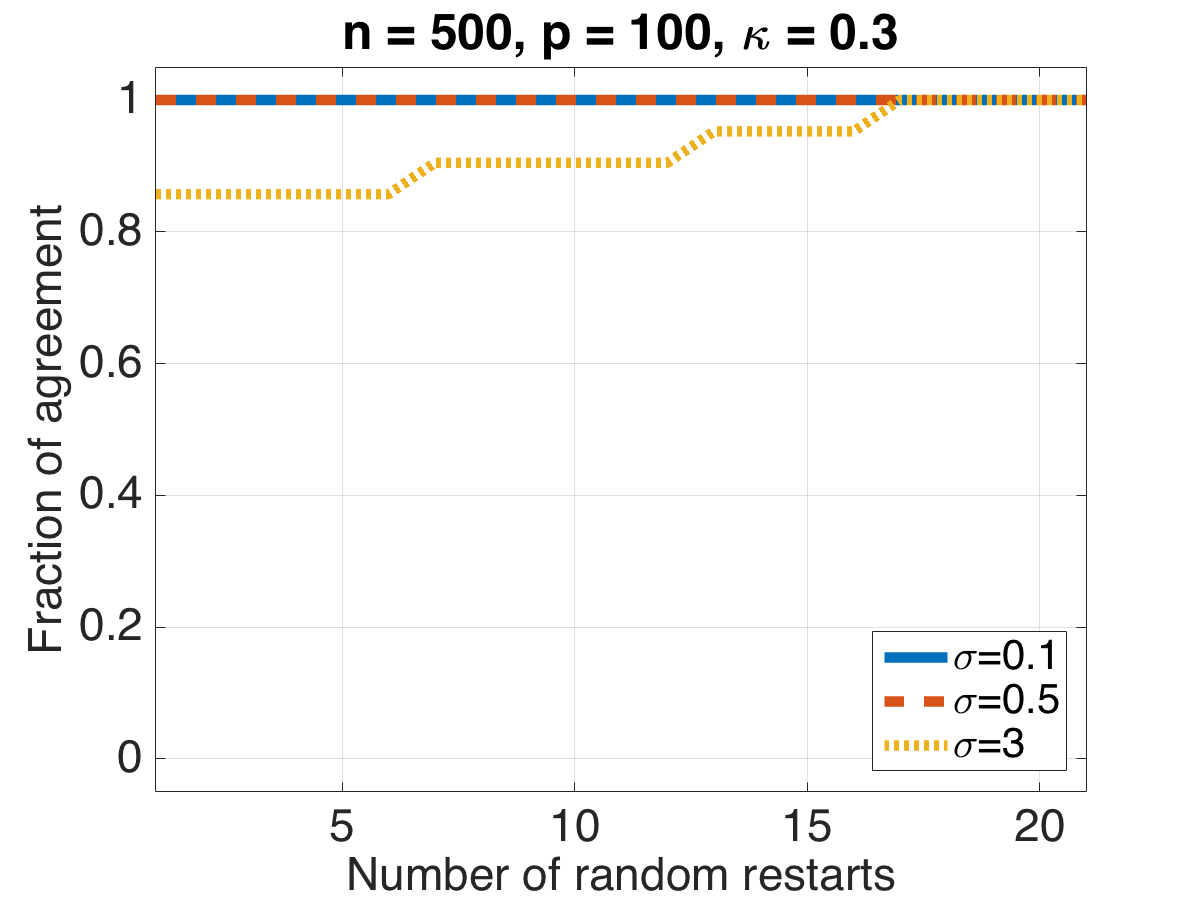}
  \includegraphics[width=0.32\linewidth]{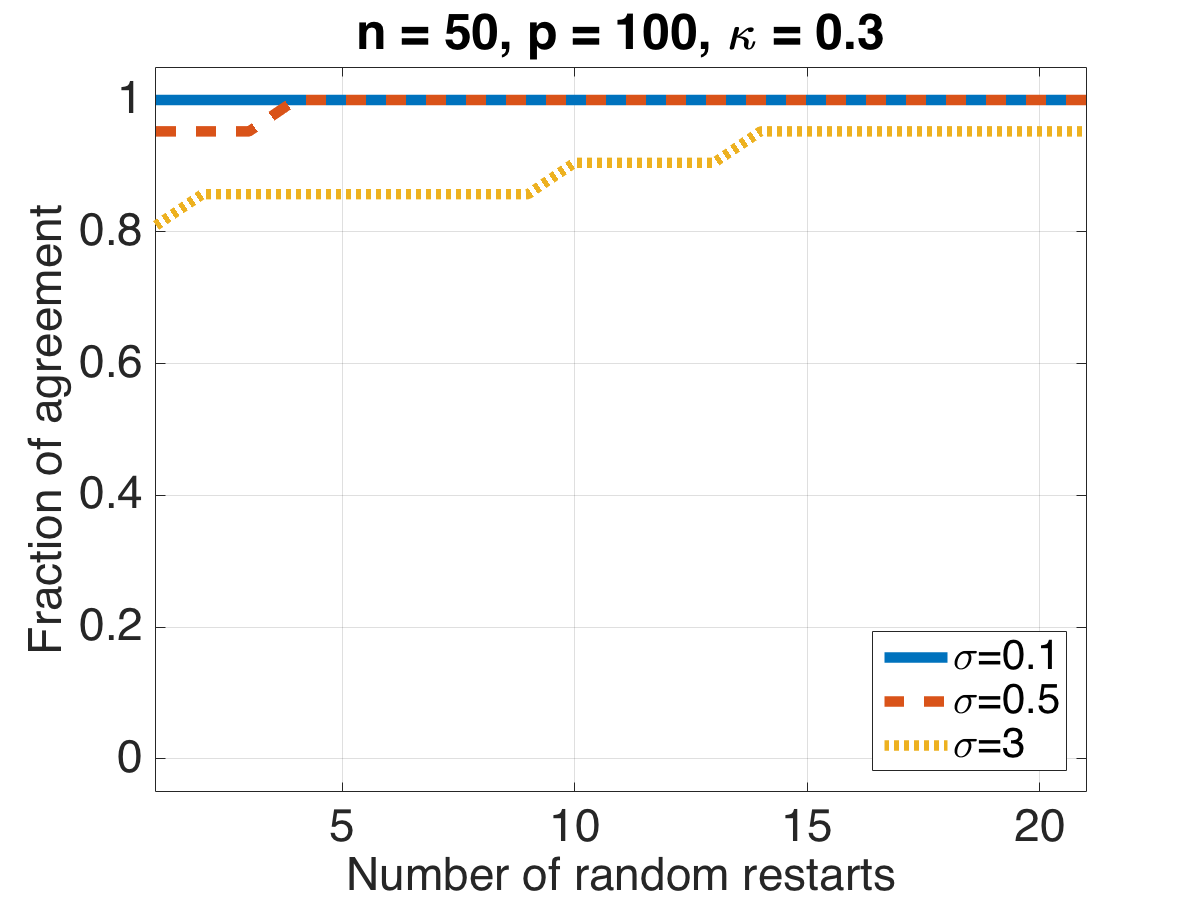}
  \includegraphics[width=0.32\linewidth]{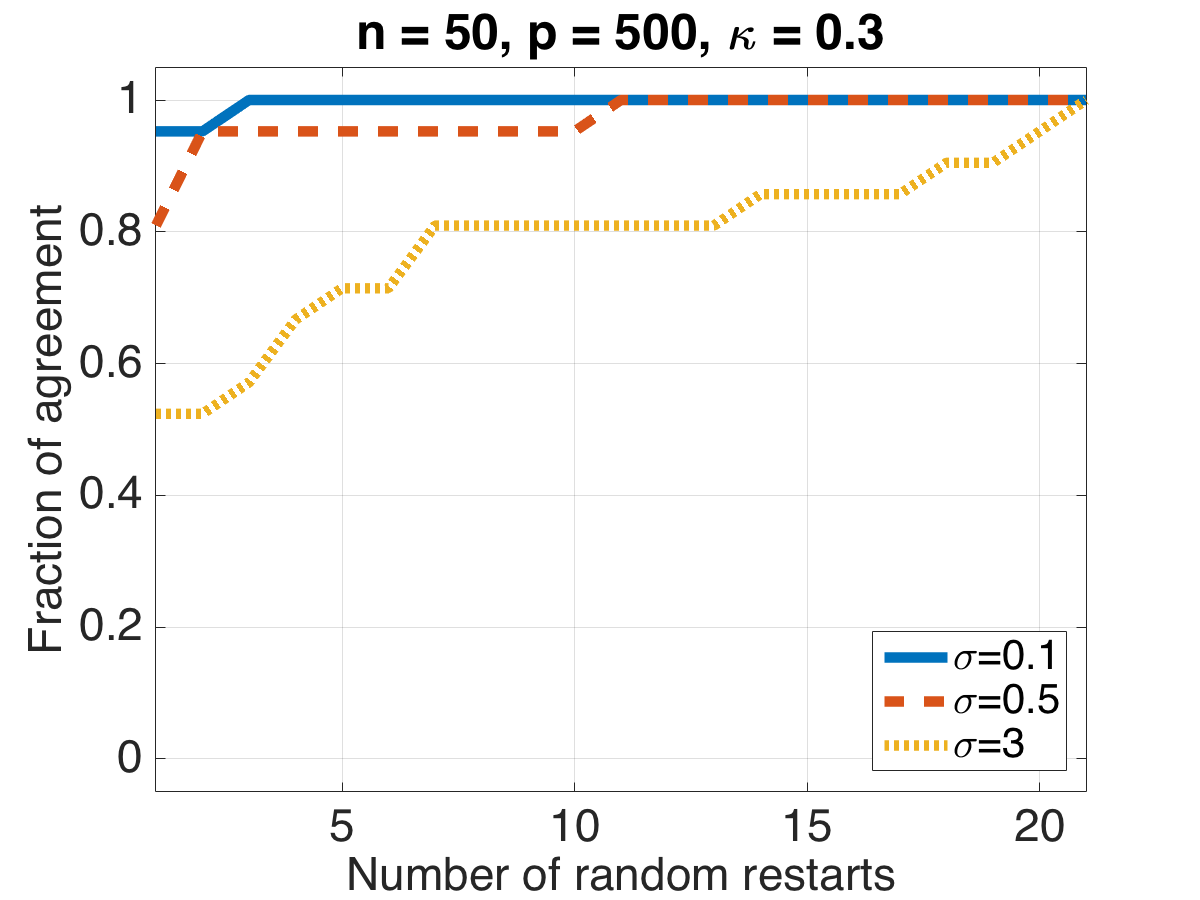}\\
    \includegraphics[width=0.32\linewidth]{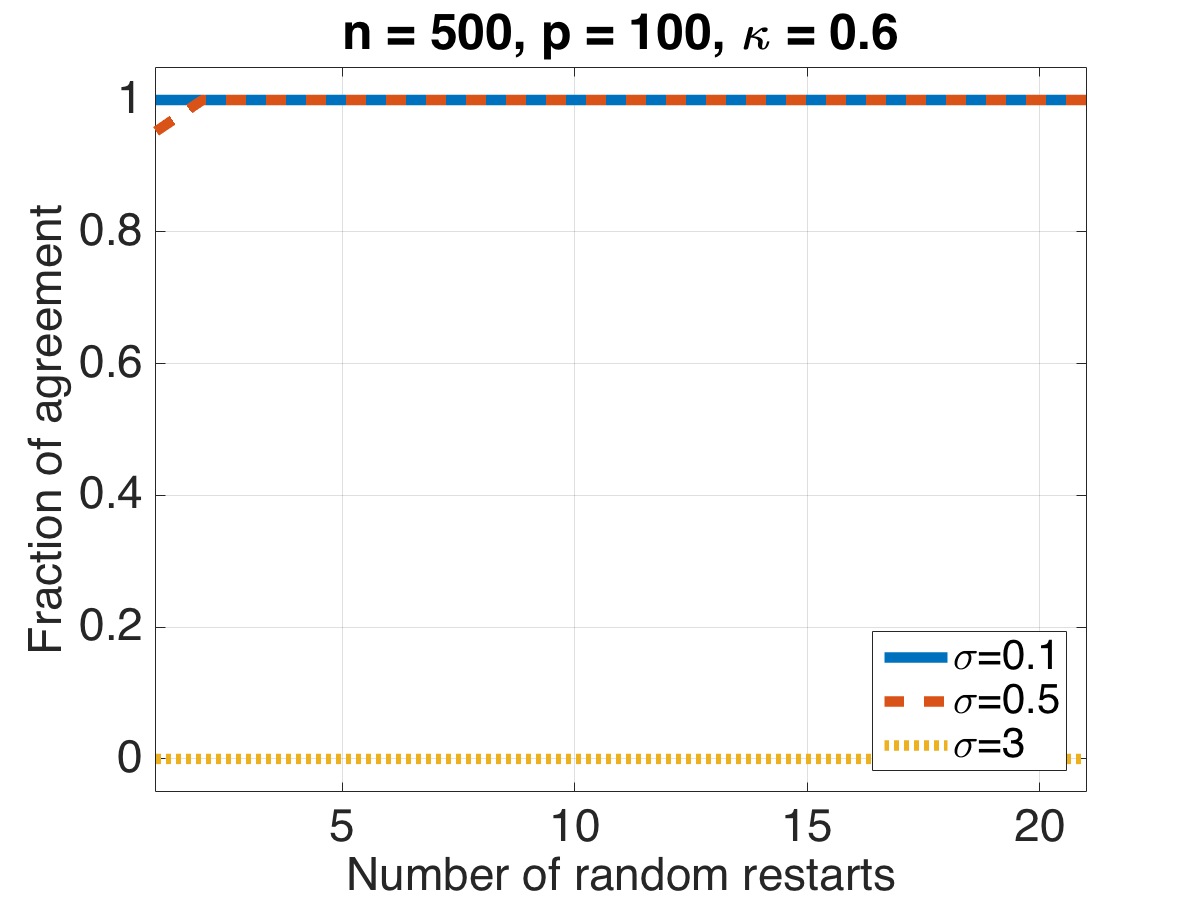}
  \includegraphics[width=0.32\linewidth]{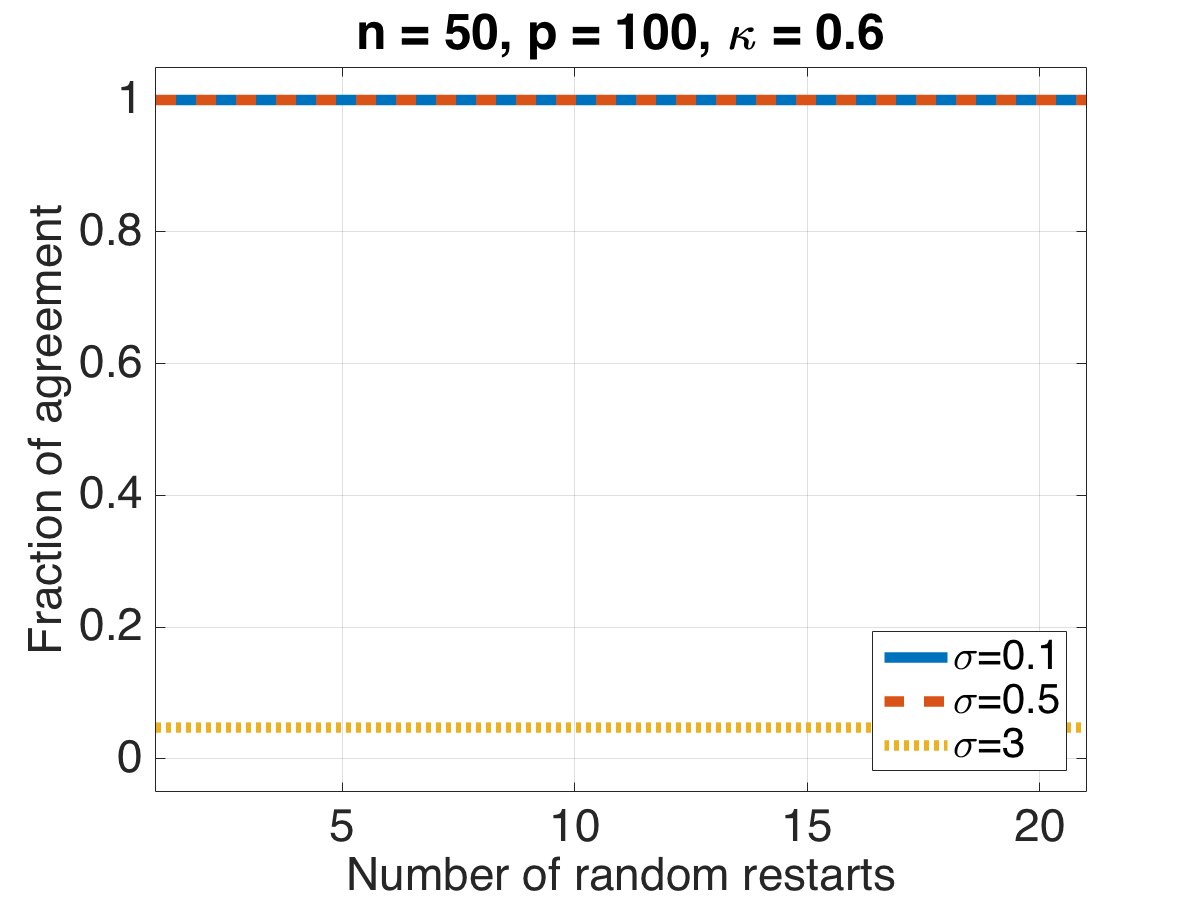}
  \includegraphics[width=0.32\linewidth]{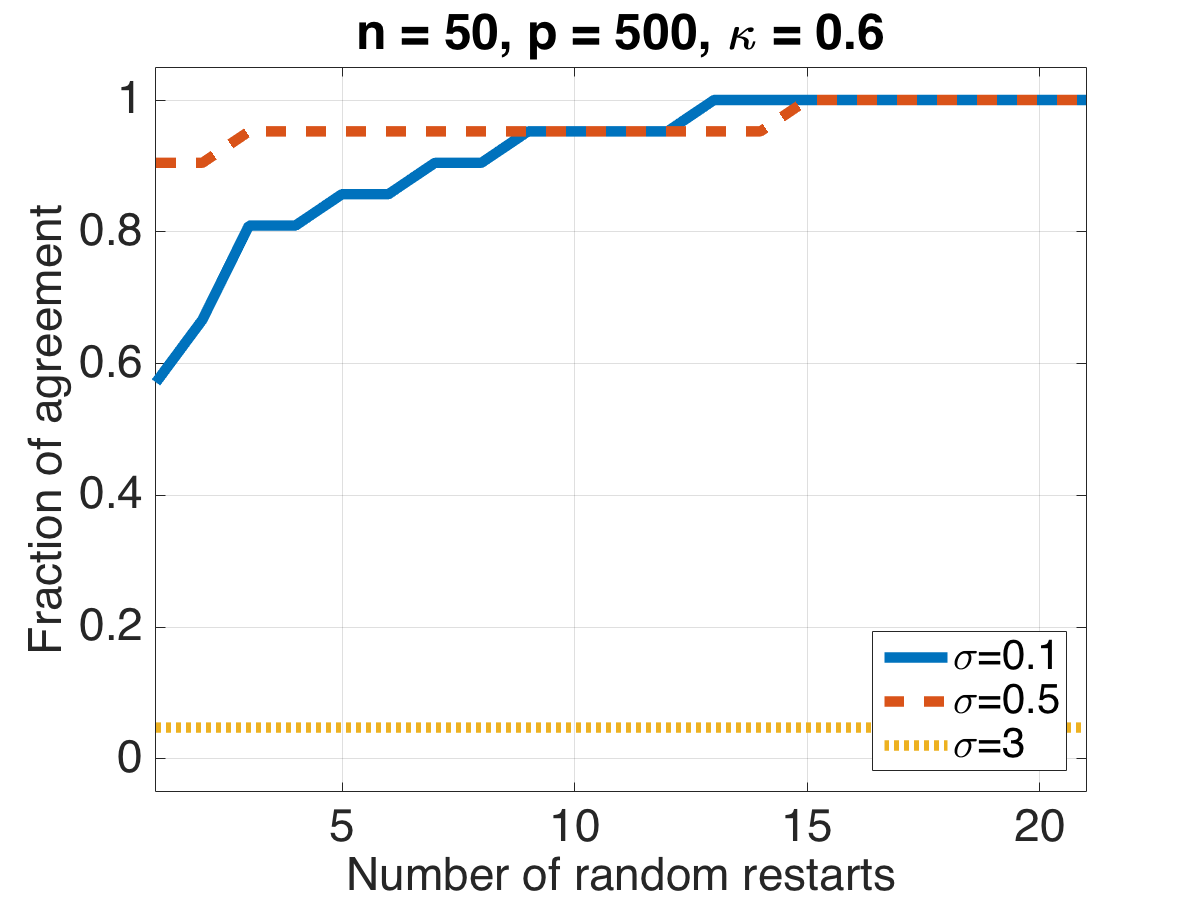}\\
    \includegraphics[width=0.32\linewidth]{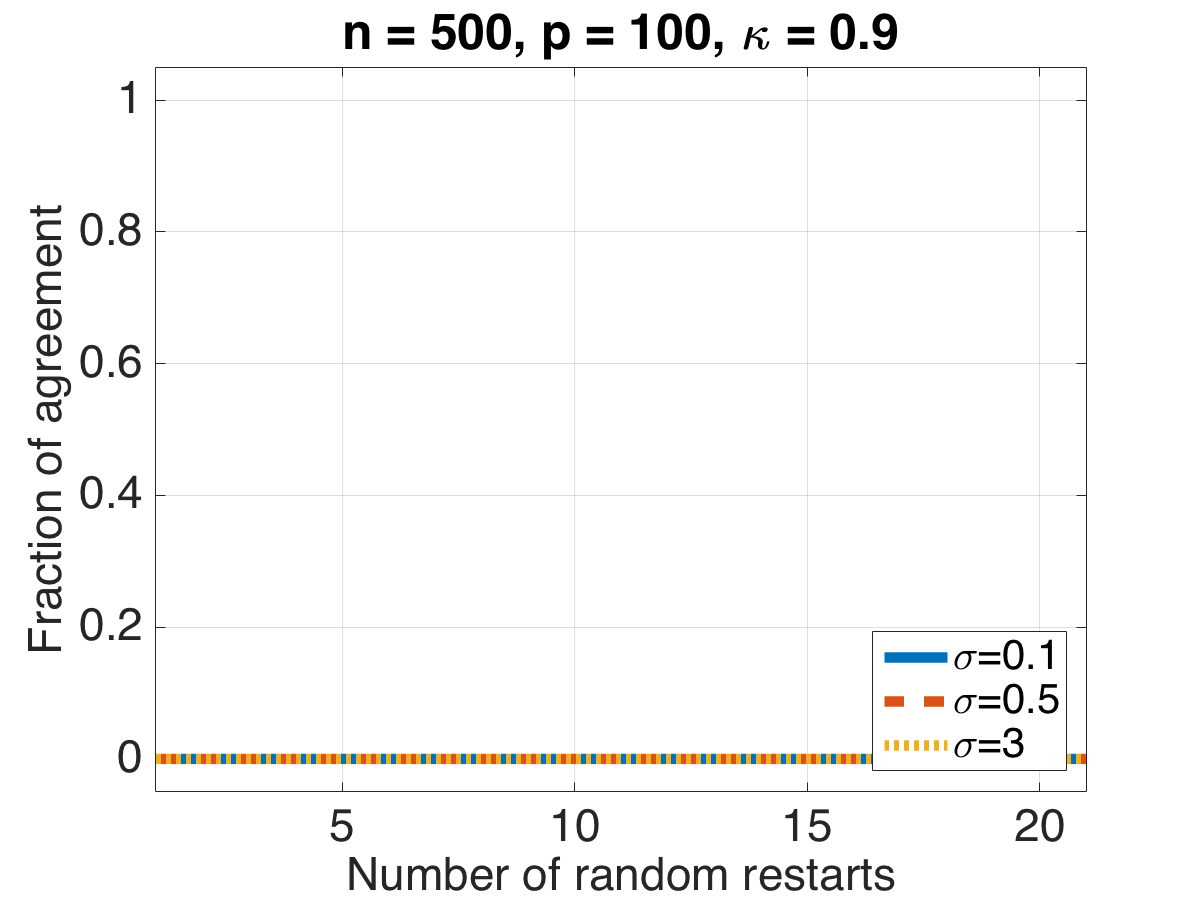}
  \includegraphics[width=0.32\linewidth]{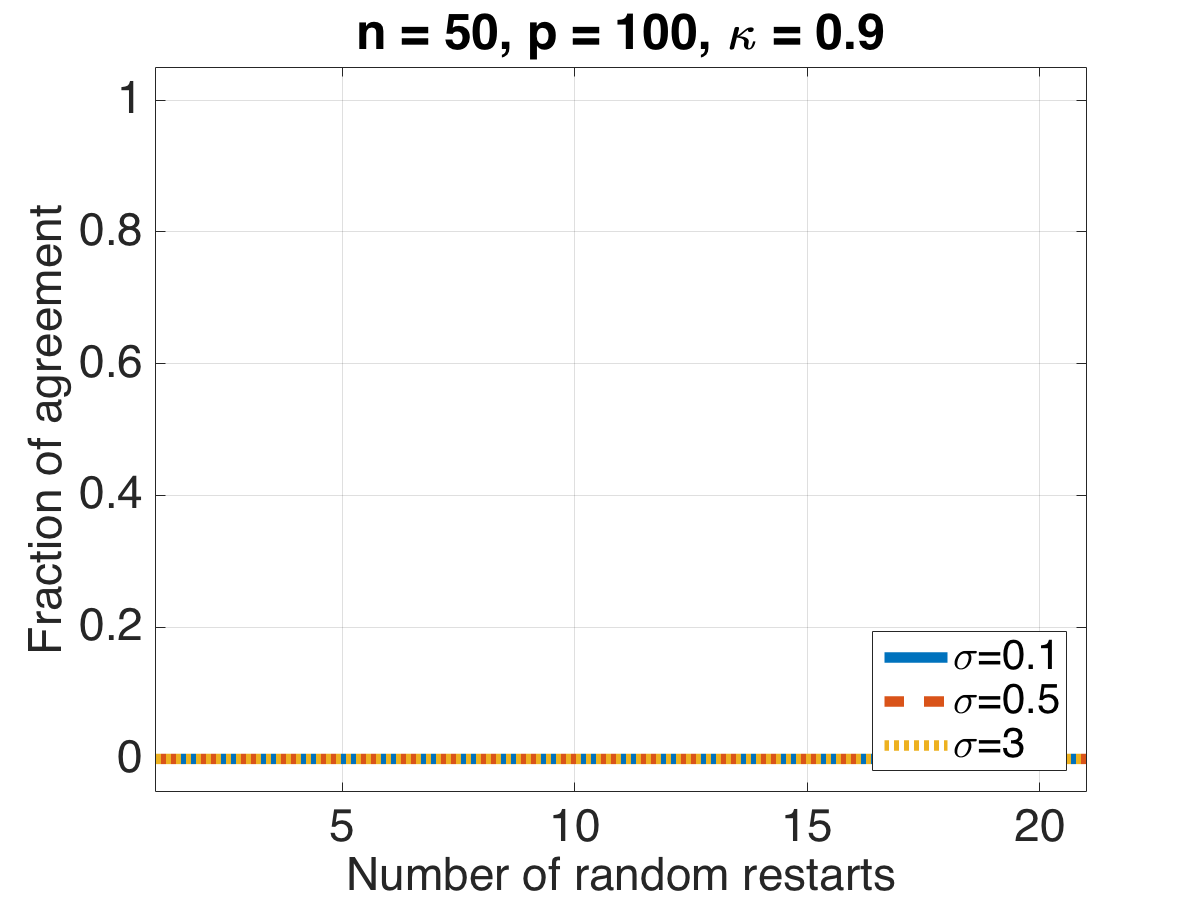}
  \includegraphics[width=0.32\linewidth]{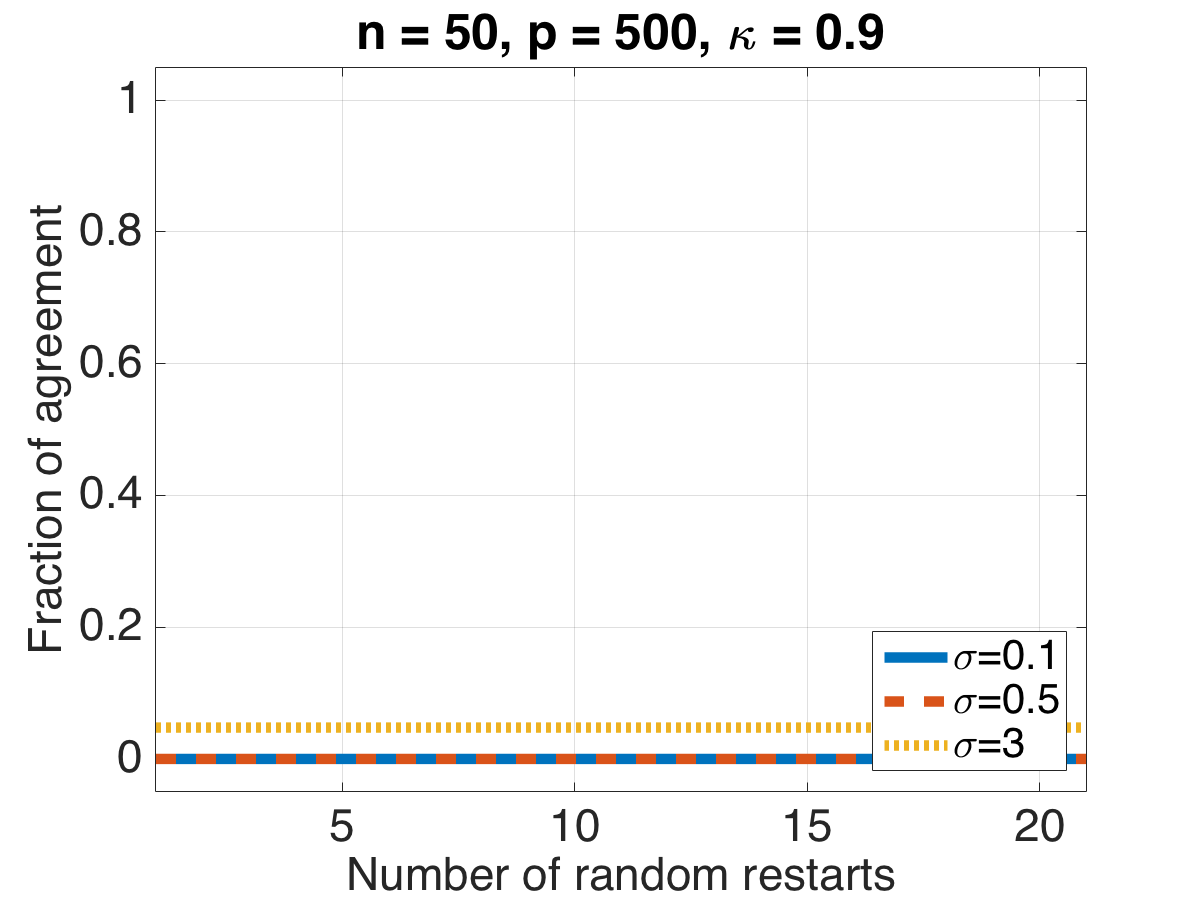}
  \caption{Probability that q-TREX gets within $10^{-4}$ of the global optimal value as a function of number of random restarts. (The first starting point is always taken to be the vector of all zeros.)}
  \label{fig:prob-success}
  \end{center}
\end{figure}

For the empirical study, we compare the performance of c-TREX and q-TREX using a simulation scenario similar to \citet{LedererMueller:14}. We generate data according to the linear model $Y_i=X_i^{\top}\beta+\varepsilon_i$, $i=1,\dots,n$, with three regimes for the sample size $n$ and the number of variables $p$, $(n,p)\in\{(500,100),(50,100),(50,500)\}$.The first regime corresponds to large sample setting $n>p$, and the other two correspond to low sample setting $n<p$. We set the number of nonzero variables $s=5$, regression vector $\beta=(1_s, 0_{p-s})$, errors $\varepsilon_i\sim N(0,\sigma^2)$ with $\sigma\in\{0.1,0.5,3\}$, and vector of predictors $X_i\sim N(0,\Sigma)$ with $\Sigma_{ii}=1$ and $\Sigma_{ij}=\kappa$ with $\kappa\in\{0,0.3,0.6,0.9\}$. We have also tried $\beta=(0.2,0.4,0.6,0.8,1.0,0_{p-s})$ and obtained the same qualitative results. We consider $n_{rep}=21$ replications for each combination of $\{p,\kappa,\sigma\}$. We use $n_{starts}=21$ initial values for $\beta$ for q-TREX with $\beta^{(0)}_1=0$ and $\beta^{(0)}_i$, $i=2,...,n_{starts}$ initialized at random with $25\%$ nonzero features. We have also tried initializing q-TREX with lasso solutions obtained via glmnet \citep{glmnet2013}, but the results are nearly identical to random initializations. For all the simulations, we set TREX constant $\consttrex=0.5$.

Figure~\ref{fig:prob-success} shows the empirical probability (over $n_{rep}=21$ replications) of q-TREX attaining an objective value within $10^{-4}$ of the global minimum as a function of number of restarts. The q-TREX is successful at recovering the global minimum as long as $\kappa$ and $\sigma$ are not too large. Specifically, q-TREX fails to recover global solution when $\kappa=0.9$ and consistently has low success probability when $\sigma=3$. As expected, increasing the number of initial starting points leads to a larger probability of success, however using only one starting point $\beta^{(0)}=0$ provides satisfactory performance for small $\kappa$ and $\sigma$.

\subsection{Timing Results}

We compare q-TREX and c-TREX timing performance on a laptop with 3.1 GHz Intel Core i7 using Matlab\_R2015b. The timing for both changes significantly with the dimension $p$, and is not significantly influenced by $\kappa$ or $\sigma$. In Figure~\ref{fig:runtime} we present results for $\kappa\in\{0,0.9\}$. The execution time reported for q-TREX  is the total time with $41$ restarts; the execution time reported for c-TREX is the total time over $2p$ problems using the ECOS solver~\citep{domahidi2013ecos}. The q-TREX is significantly faster than c-TREX.

\begin{figure}[!t] 
    \centering
    \includegraphics[width=0.32\linewidth]{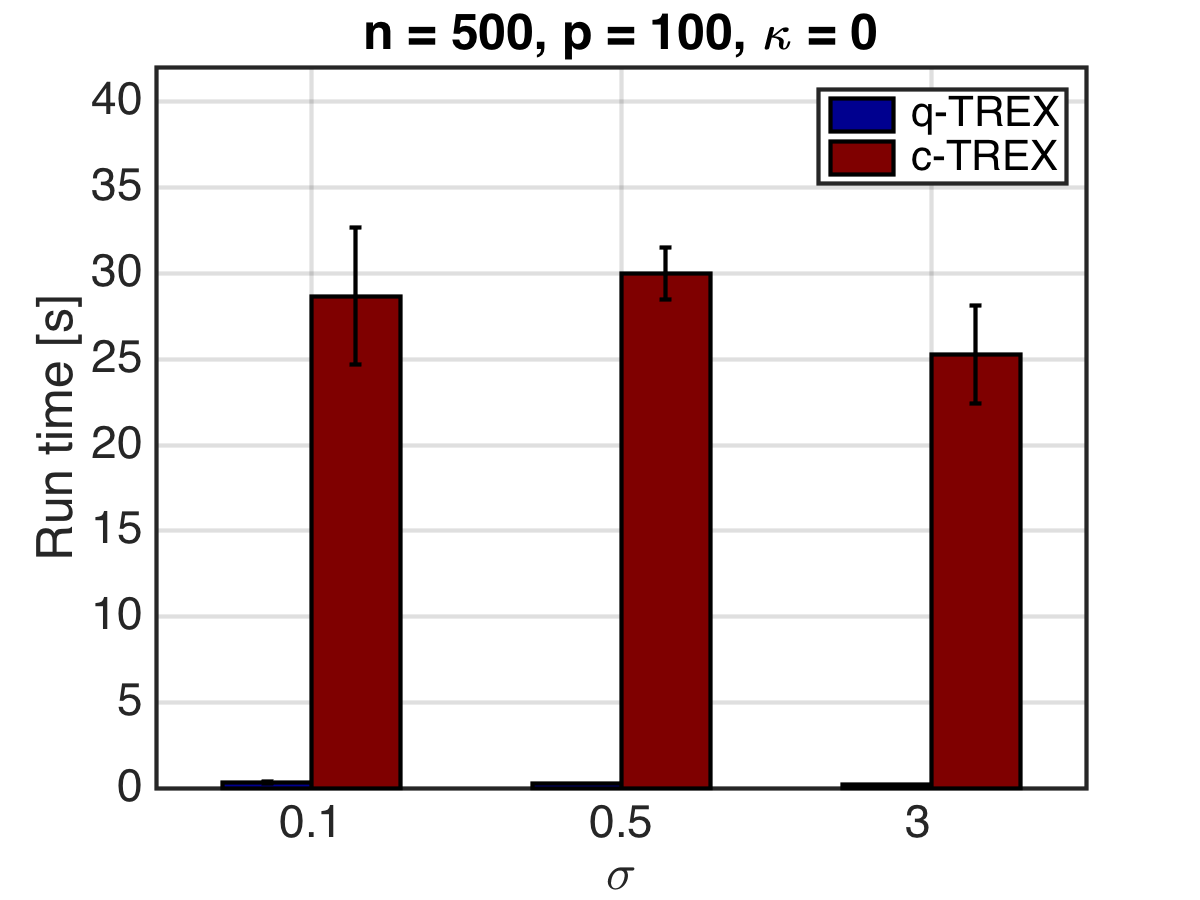}
  \includegraphics[width=0.32\linewidth]{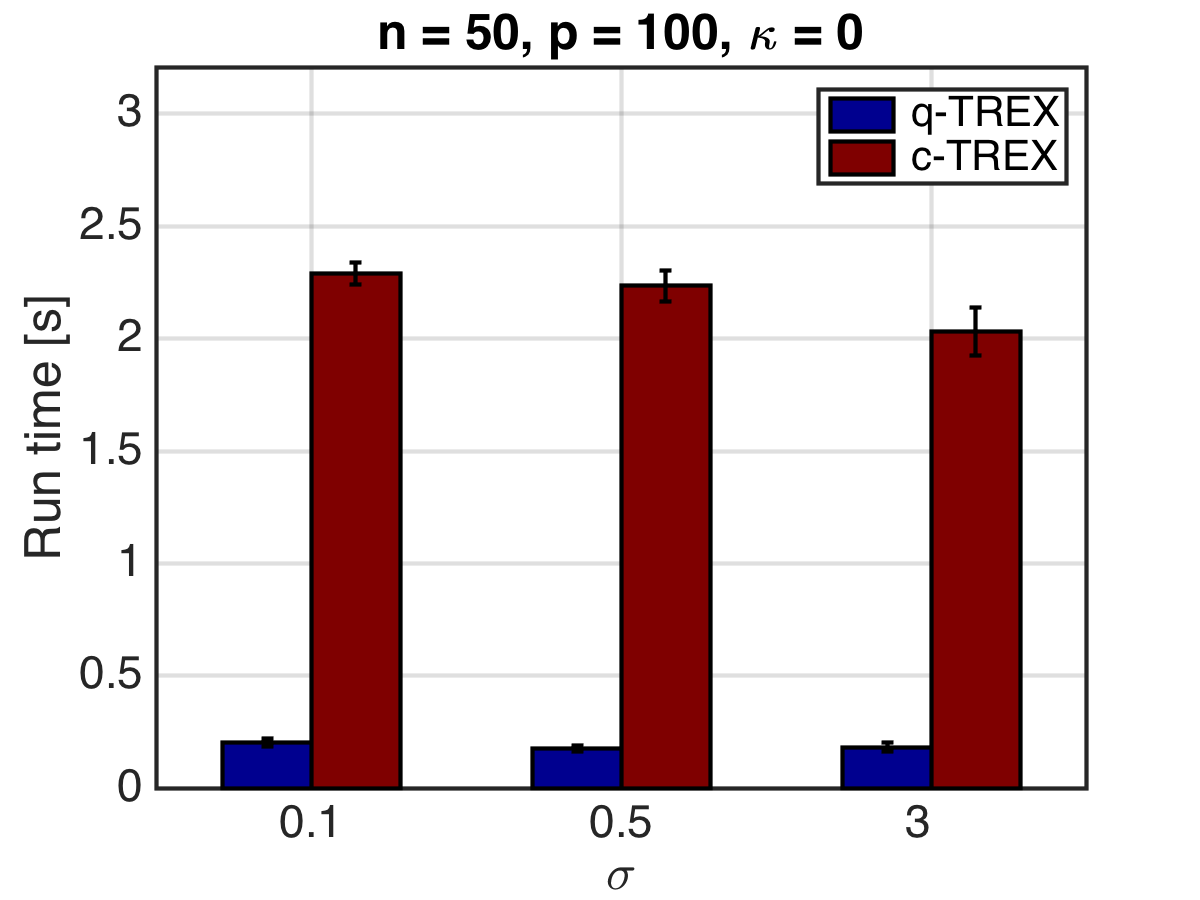}
  \includegraphics[width=0.32\linewidth]{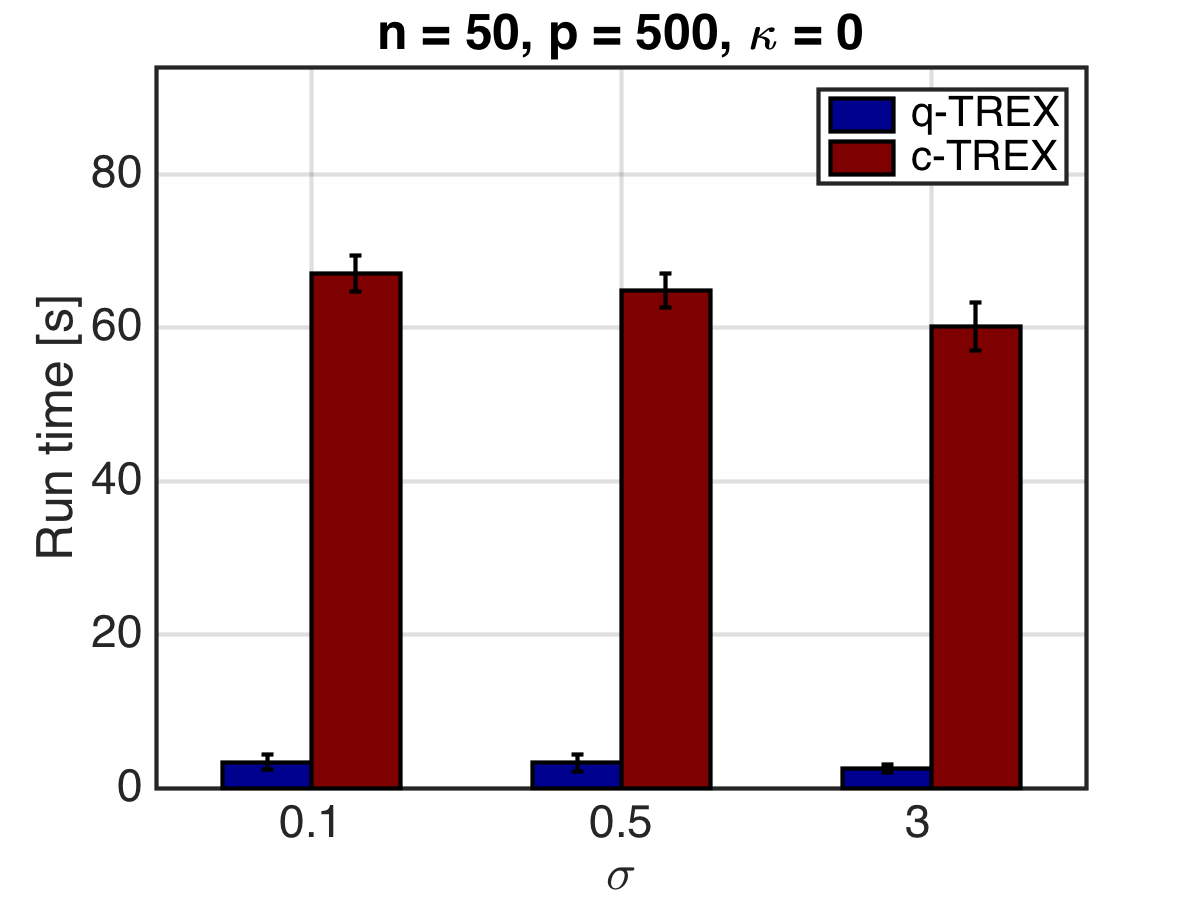}\\
  \includegraphics[width=0.32\linewidth]{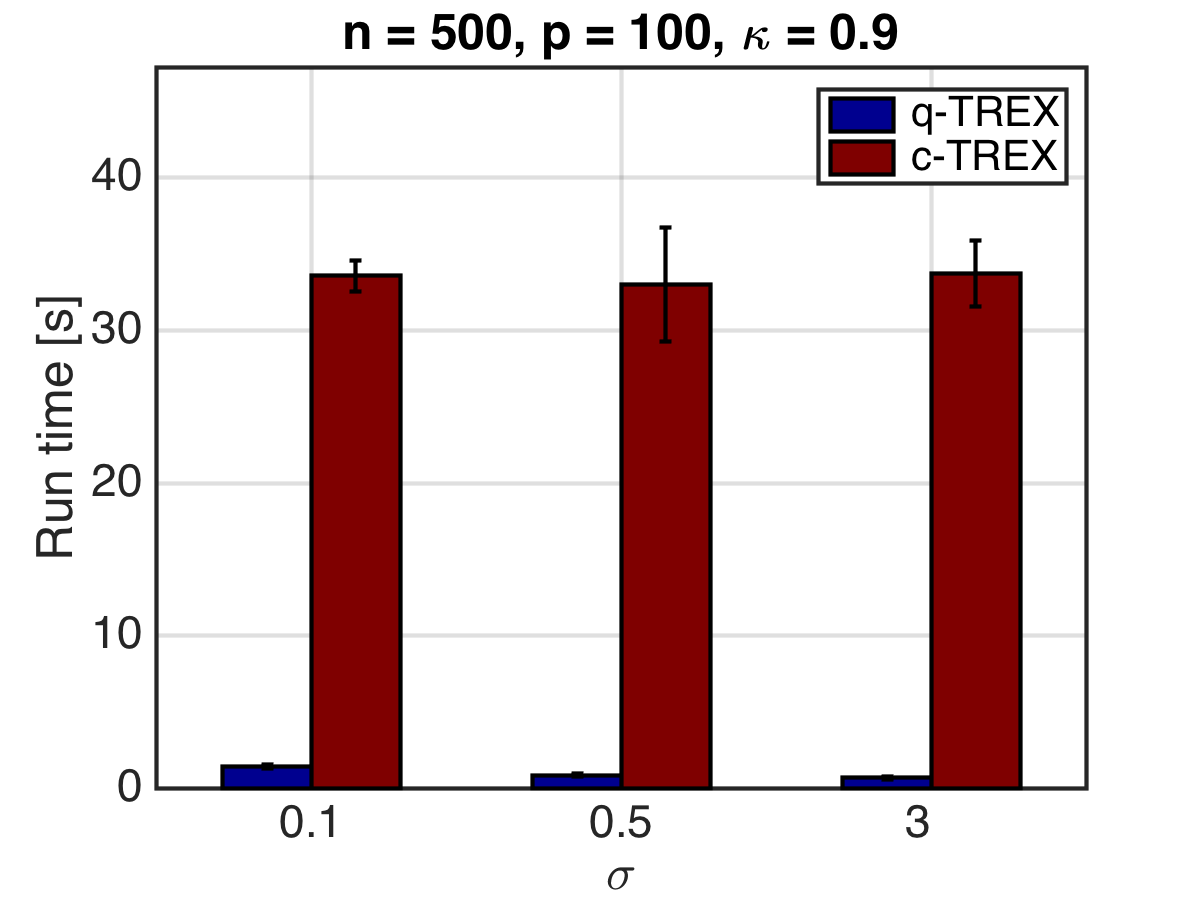}
  \includegraphics[width=0.32\linewidth]{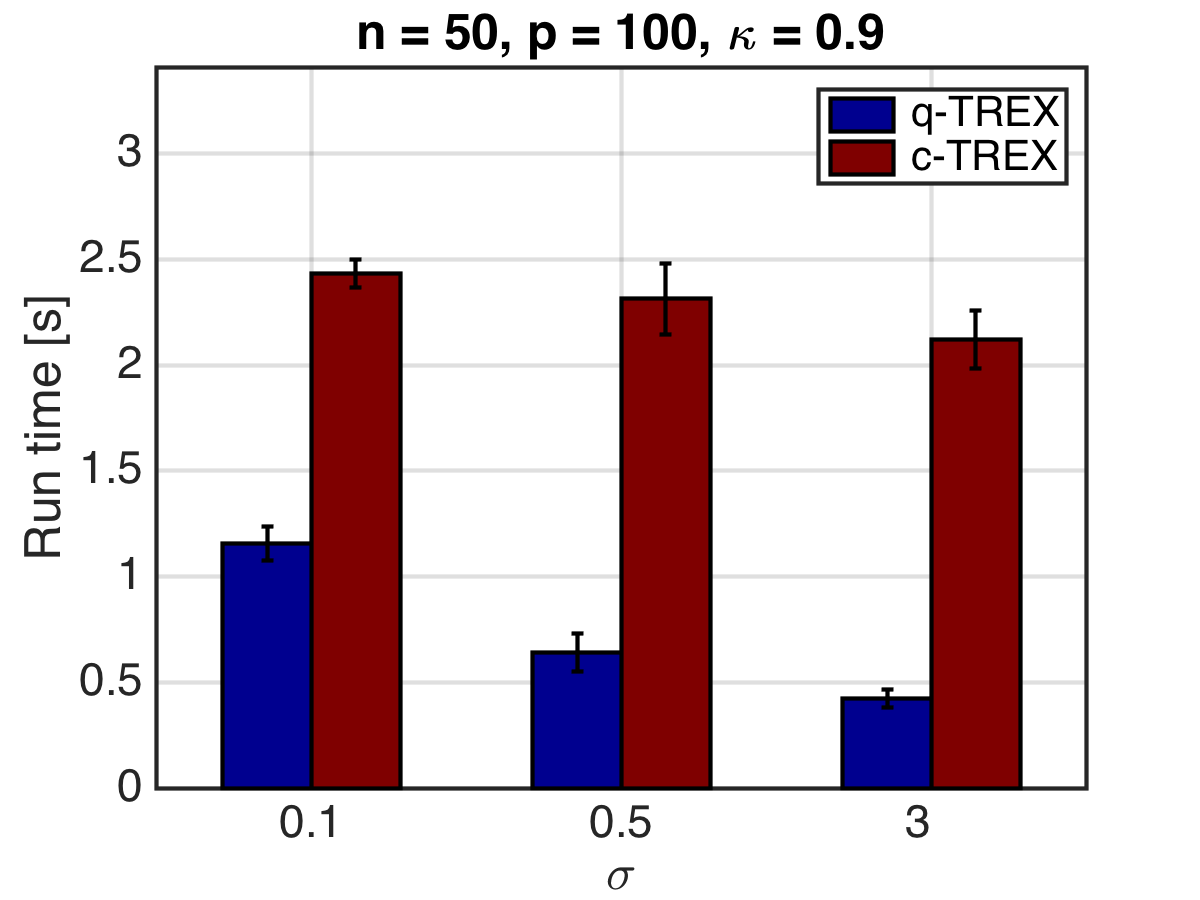}
  \includegraphics[width=0.32\linewidth]{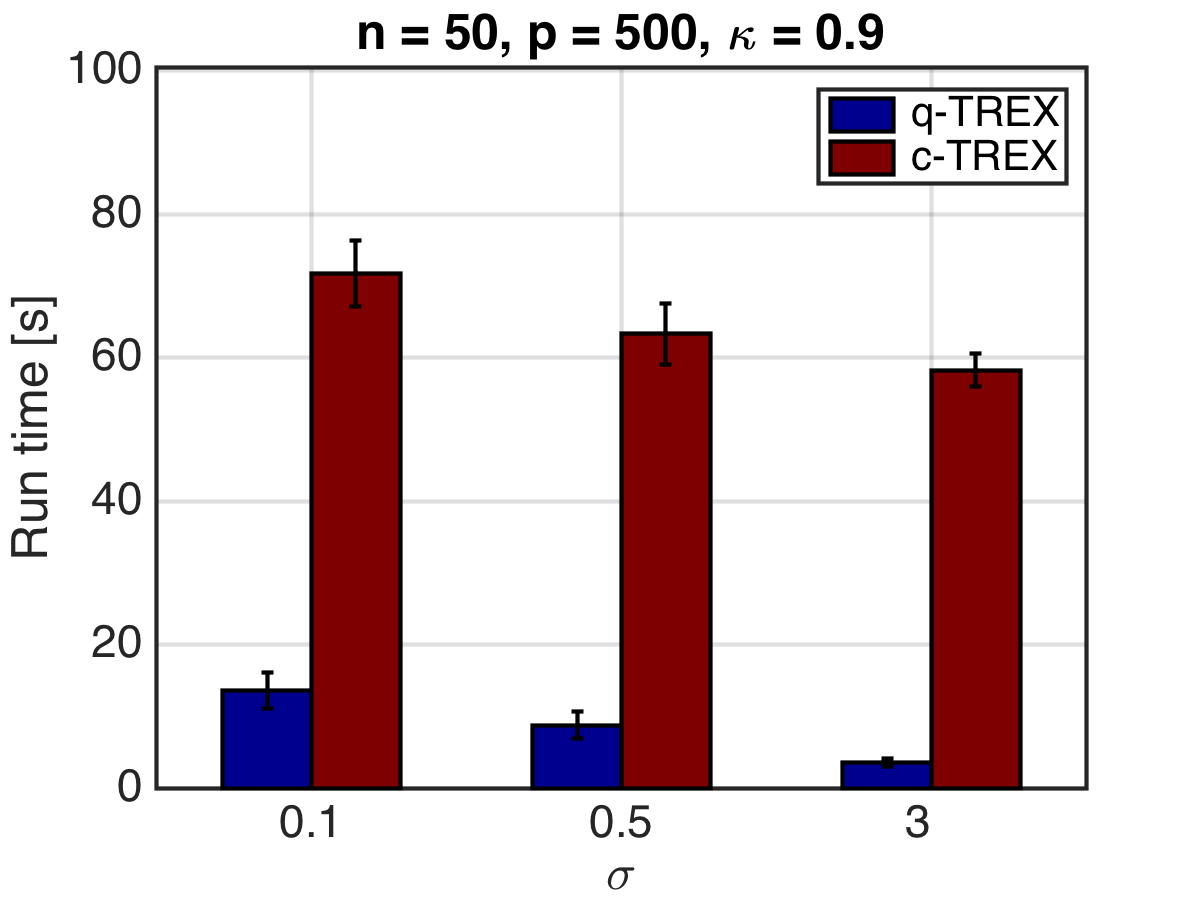}
  \caption{Average run time of q-TREX and c-TREX over $n_{rep}=21$ replications, $\kappa=0$ and $\kappa=0.9$.}
  \label{fig:runtime}
\end{figure}

\subsection{Statistical Performance}
\label{sec:statPerform}

We have seen that q-TREX is much faster than c-TREX; however, Section \ref{sec:investigate} shows that q-TREX fails to achieve the global minimization in some situations, for example when $\kappa=0.9$. Here we investigate whether this computational discrepancy has an effect on statistical performance. Specifically, we compare the estimation error $\|\hat \beta-\betatrue\|_2$ for q-TREX and c-TREX when $\kappa\in \{0,0.9\}$ (Figure~\ref{fig:estimation}, the results for $\kappa\in\{0.3,0.6\}$ are similar). The estimation error of q-TREX is on average the same as for c-TREX for all combinations of $\{p,\kappa,\sigma\}$. While we of course do not usually know the true values of $\kappa$ and $\sigma$ in real settings, we find no evidence in terms of estimator performance that one should prefer the exact TREX solution over the q-TREX solution: If $\kappa$ and $\sigma$ are both small, the two methods result in the same computational and statistical performance. If either $\kappa$ or $\sigma$ is large, q-TREX may fail to achieve the global optimal value, however this will not affect the statistical performance.

\begin{figure}[!t] 
    \centering
      \includegraphics[width=0.32\linewidth]{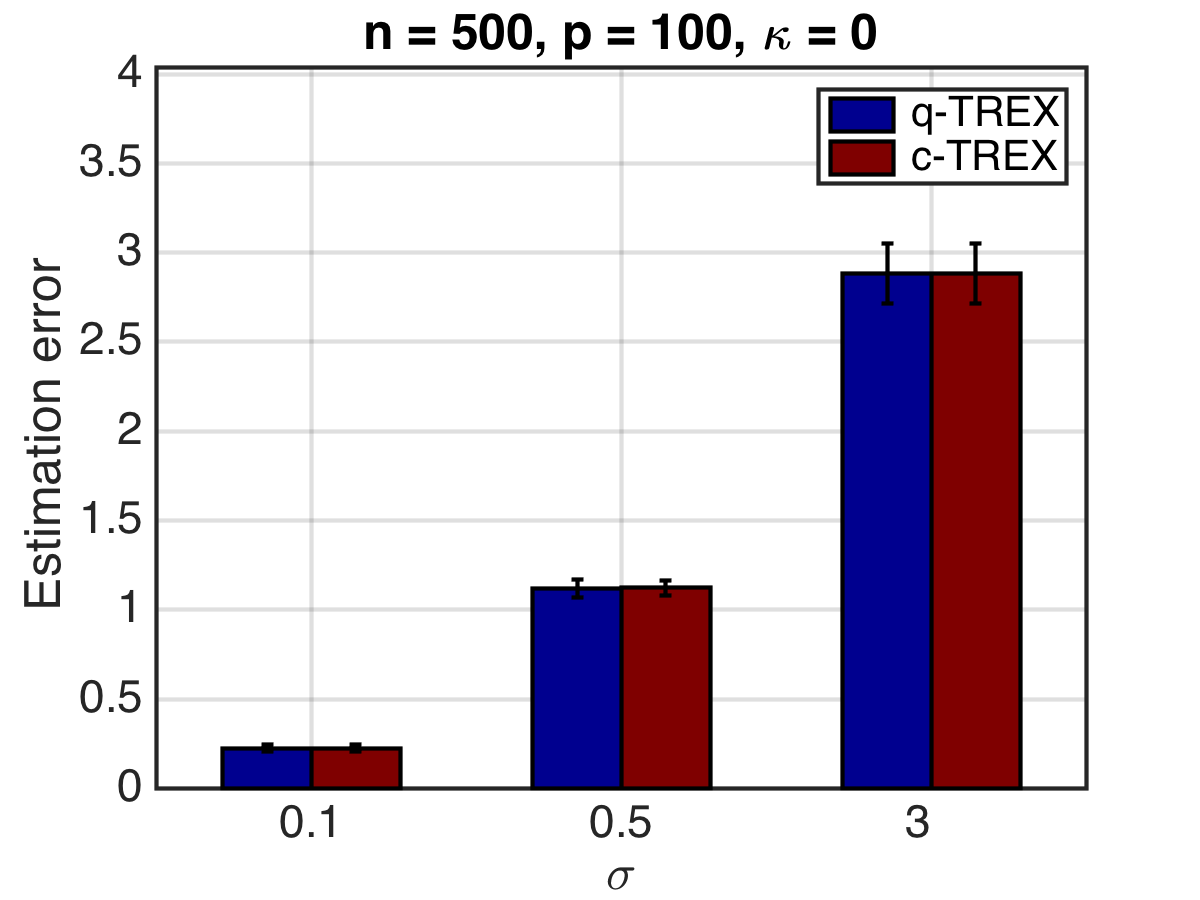}
  \includegraphics[width=0.32\linewidth]{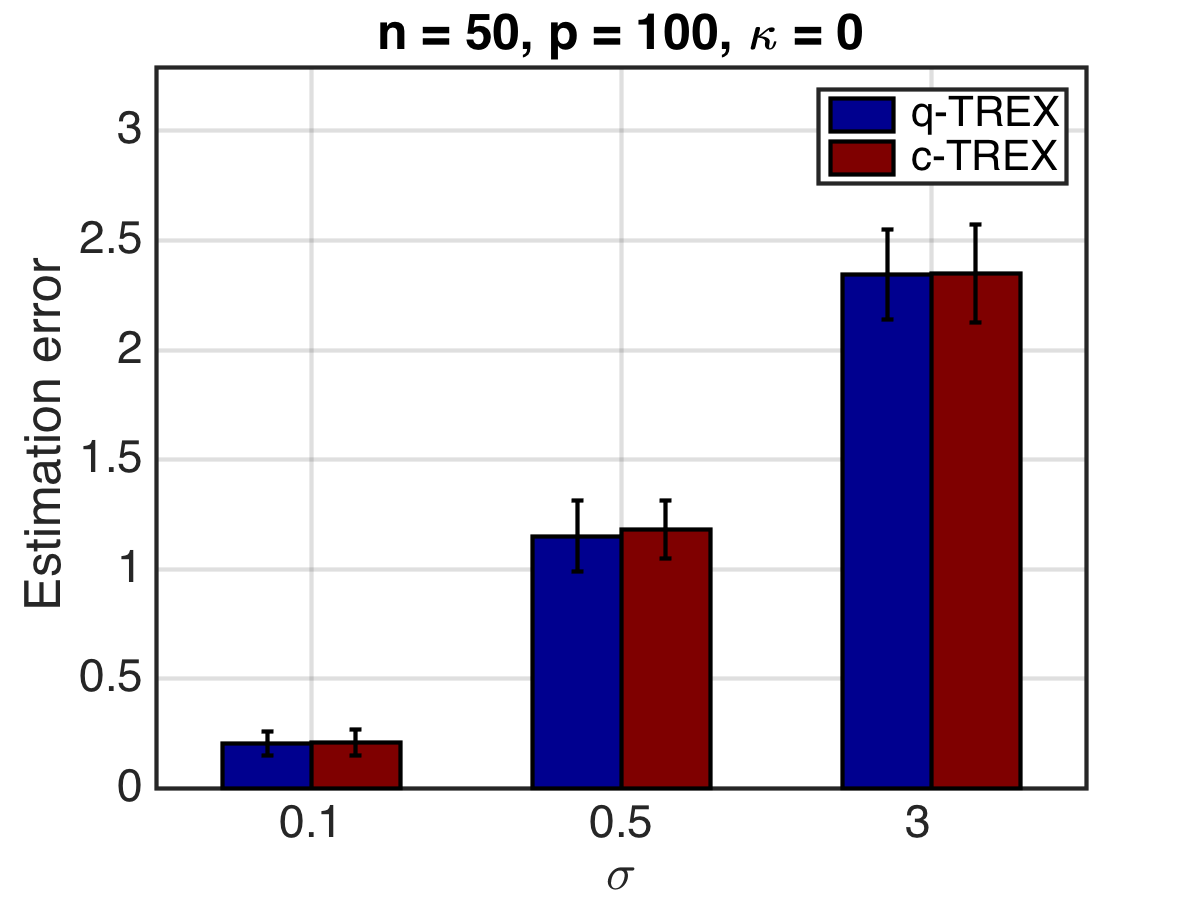}
  \includegraphics[width=0.32\linewidth]{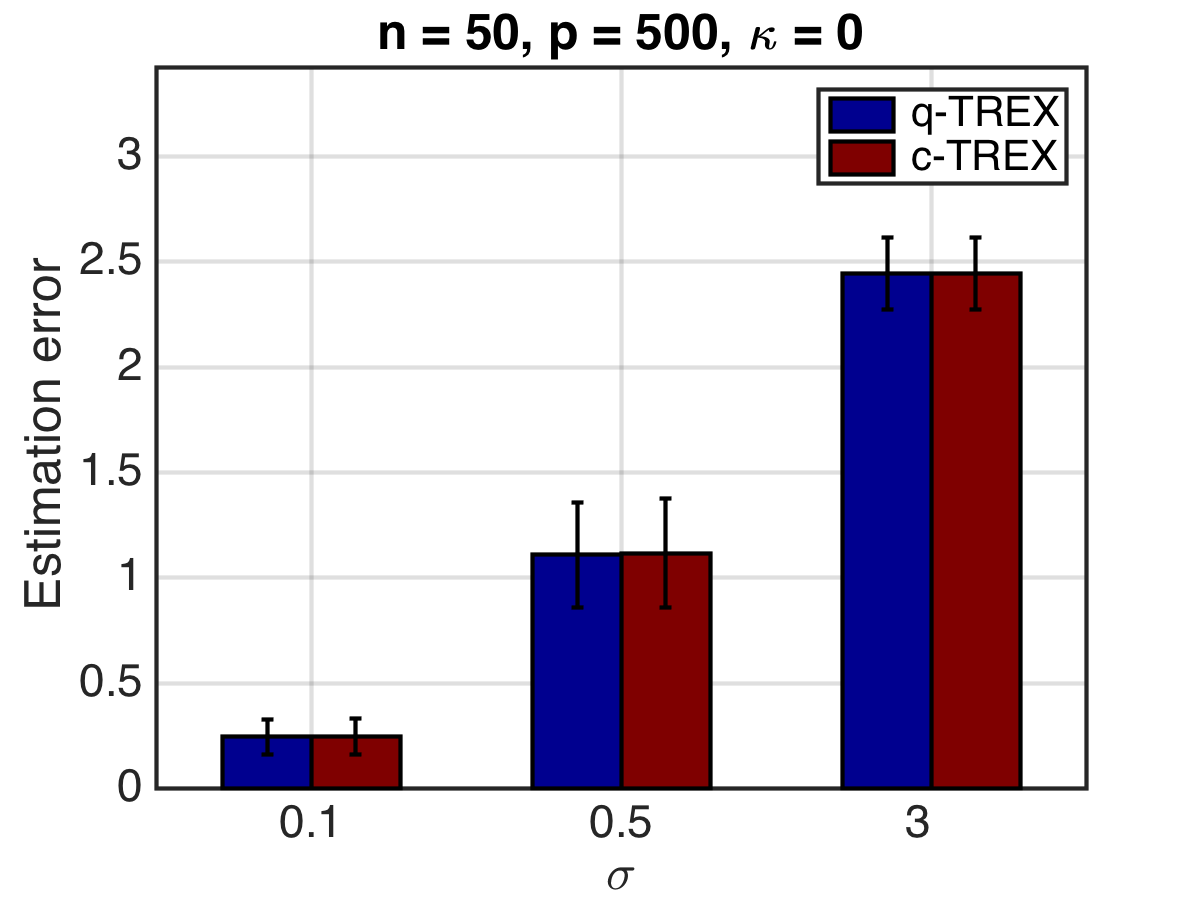}\\
    \includegraphics[width=0.32\linewidth]{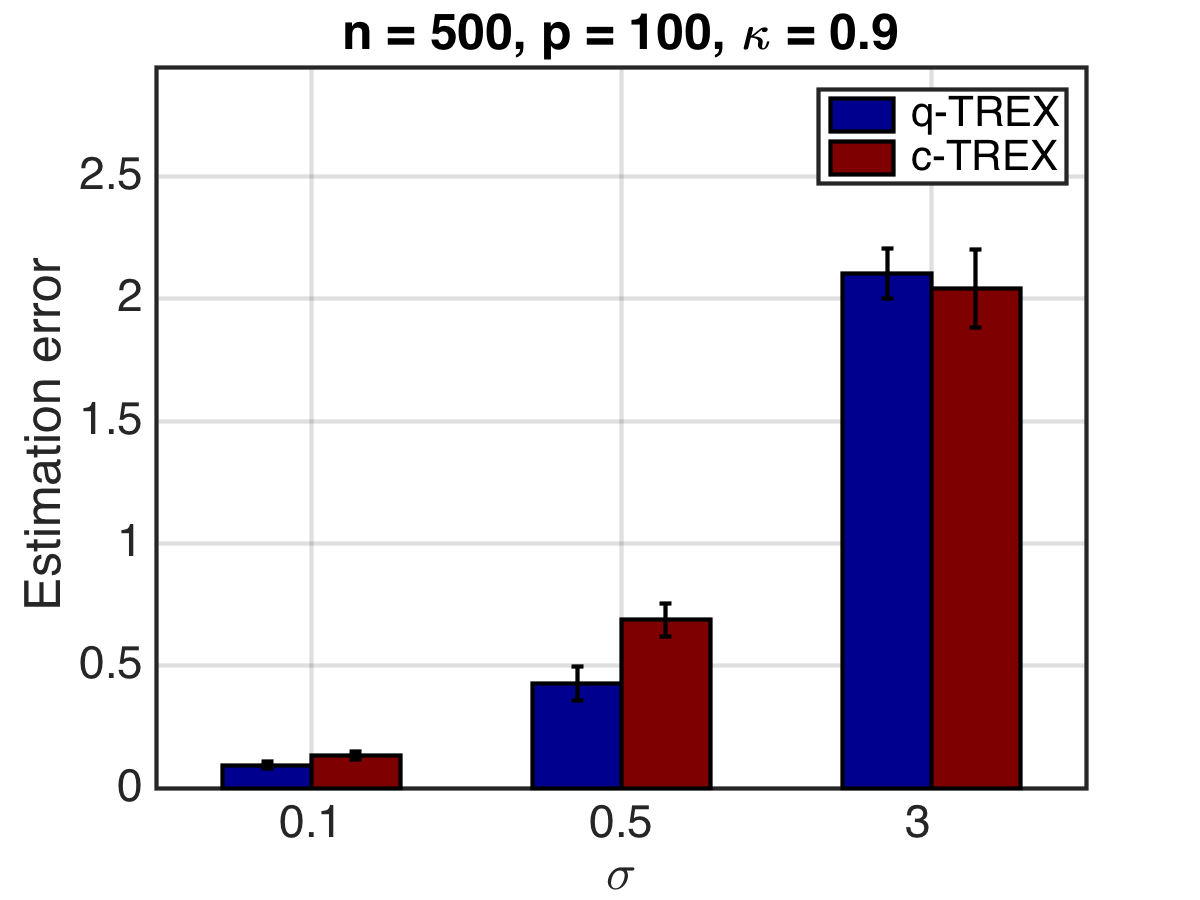}
  \includegraphics[width=0.32\linewidth]{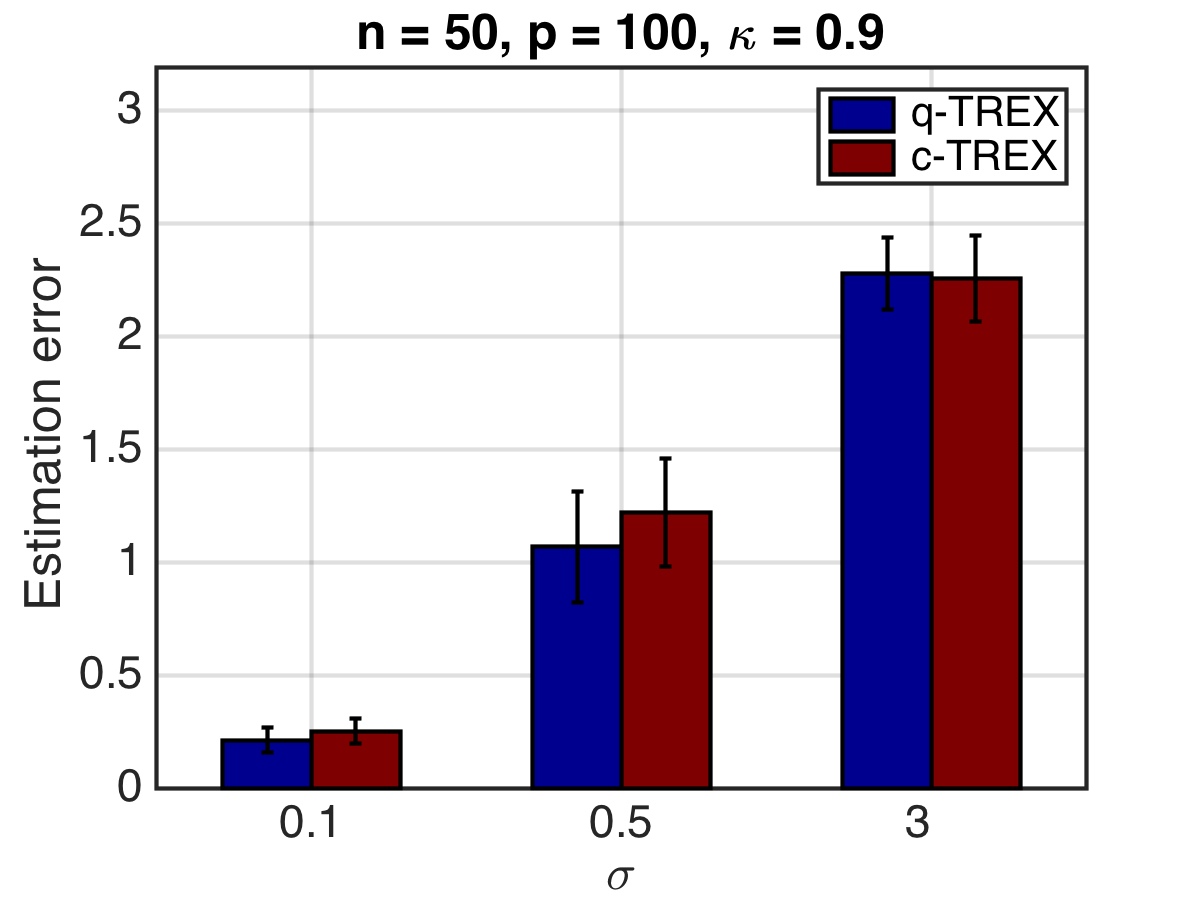}
  \includegraphics[width=0.32\linewidth]{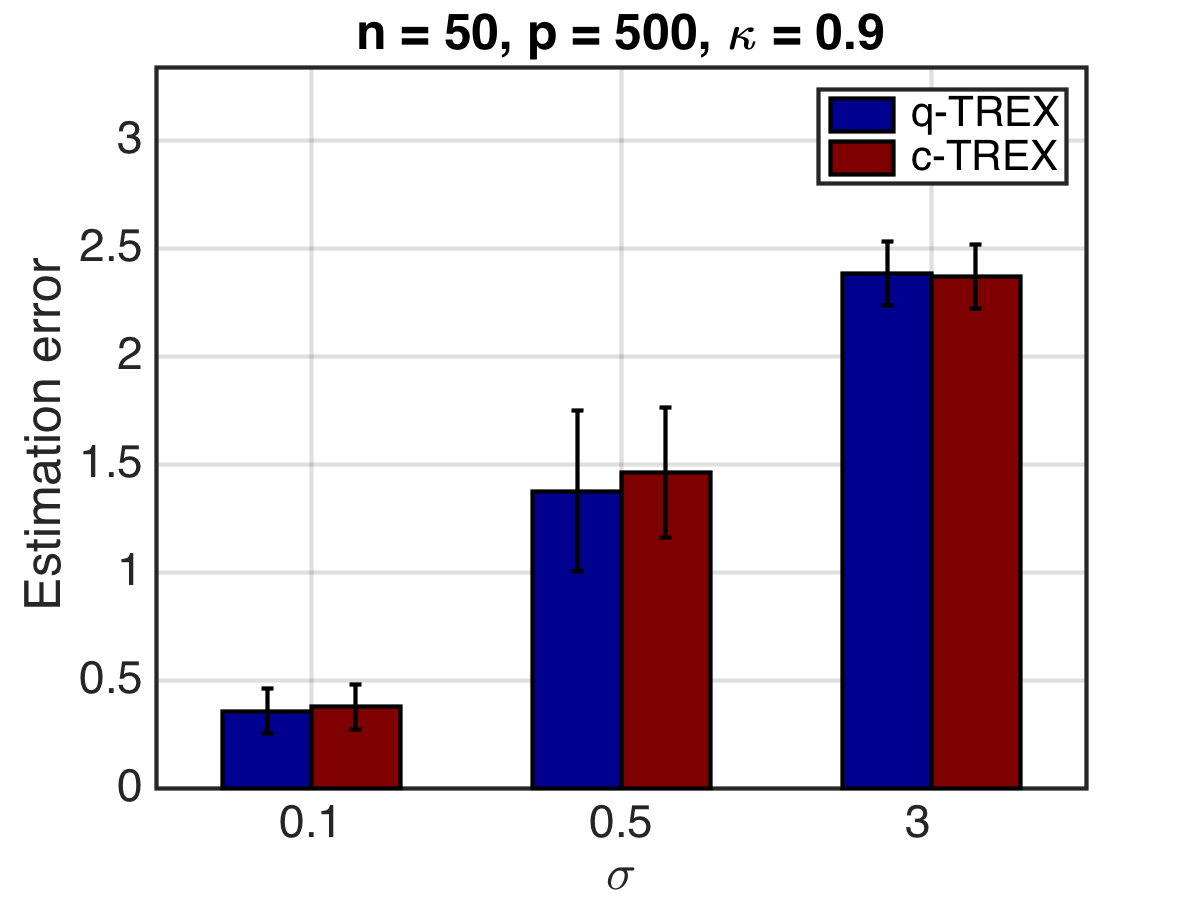}
  \caption{Average estimation error of q-TREX and c-TREX over $n_{rep}=21$ replications, $\kappa=0$ and $\kappa=0.9$.}
  \label{fig:estimation}
\end{figure}

\subsection{Topology of the Non-convex Objective}
\label{sec:topology}
While only the minimum of the $2p$ function values $P^*(s\consttrex x_j)$ is returned in the c-TREX algorithm, in this section we study the distribution of these function values and investigate whether this can give us deeper insight into the underlying problem regime. In Figure~\ref{fig:topology} we display histograms of the $2p$ optimal values computed in the c-TREX algorithm: $P^*(s\consttrex x_j)$ for $s\in\{-1,1\}$ and $j\in\{1,\ldots,p\}$.
We repeat this in different problem regimes and find that the shape of the histogram of the $2p$ values differs according to problem regime. In particular, we consider the following
representative combinations of $(\kappa,\sigma)\in \{(0,0.1);(0.6,3);(0.9,0.5)\}$. 
We observe three histogram shapes arising:

\begin{figure}[!t] 
    \centering
  \includegraphics[width=0.32\linewidth]{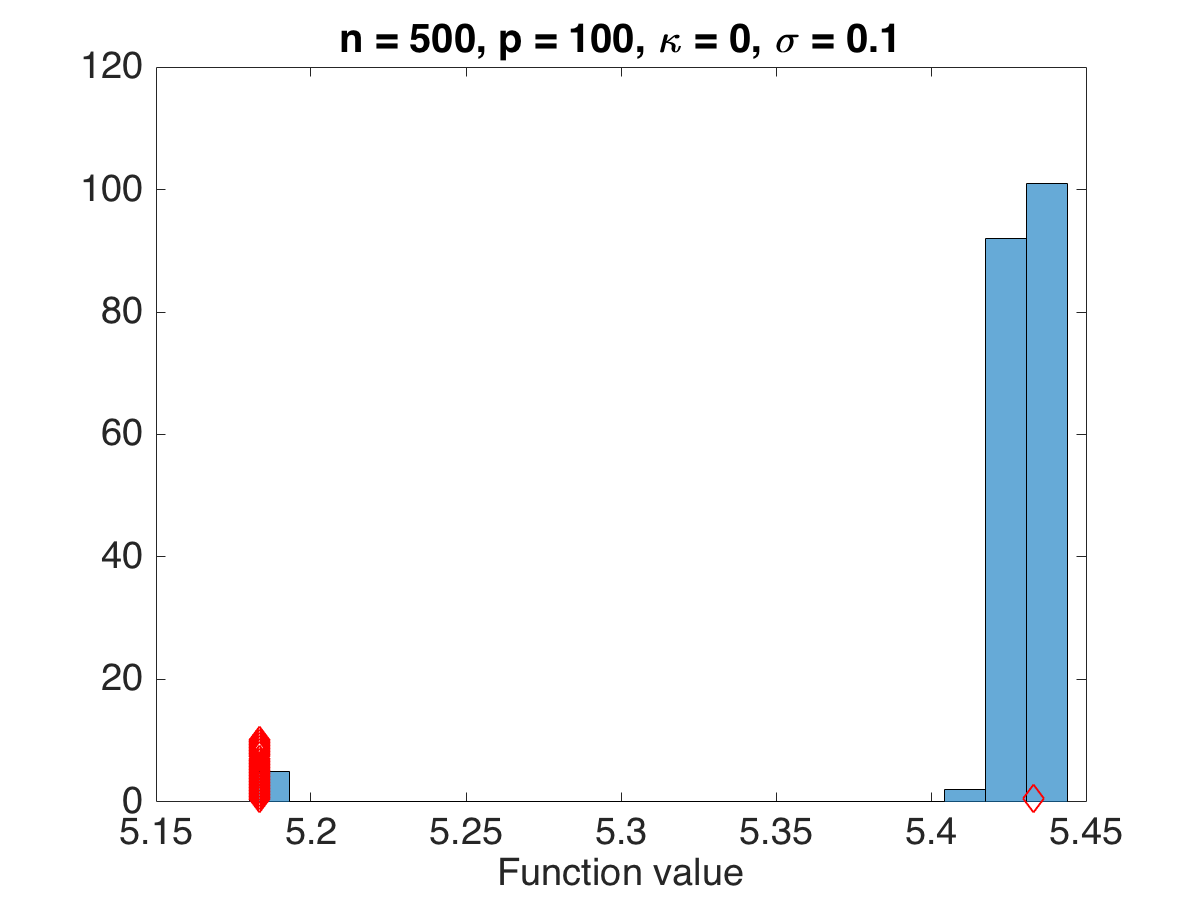}
    \includegraphics[width=0.32\linewidth]{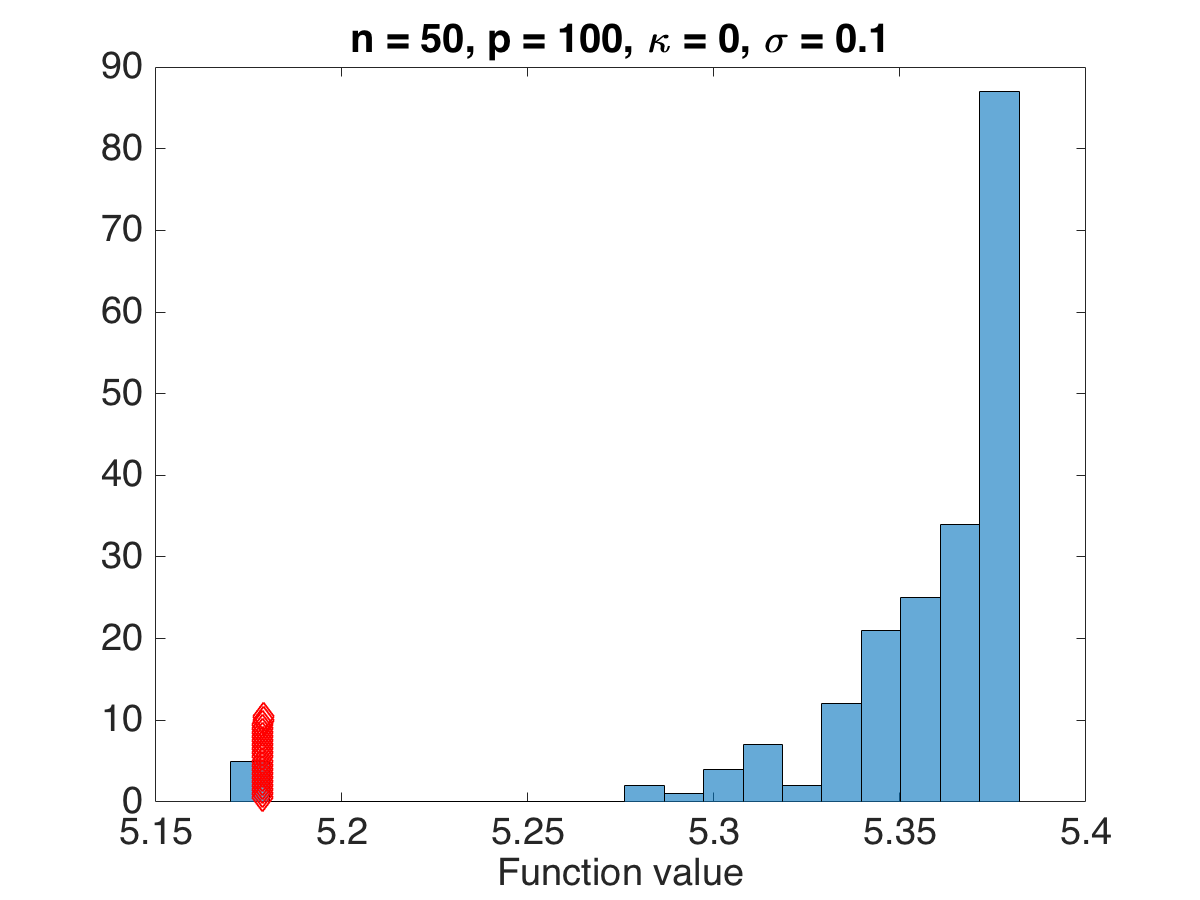}
  \includegraphics[width=0.32\linewidth]{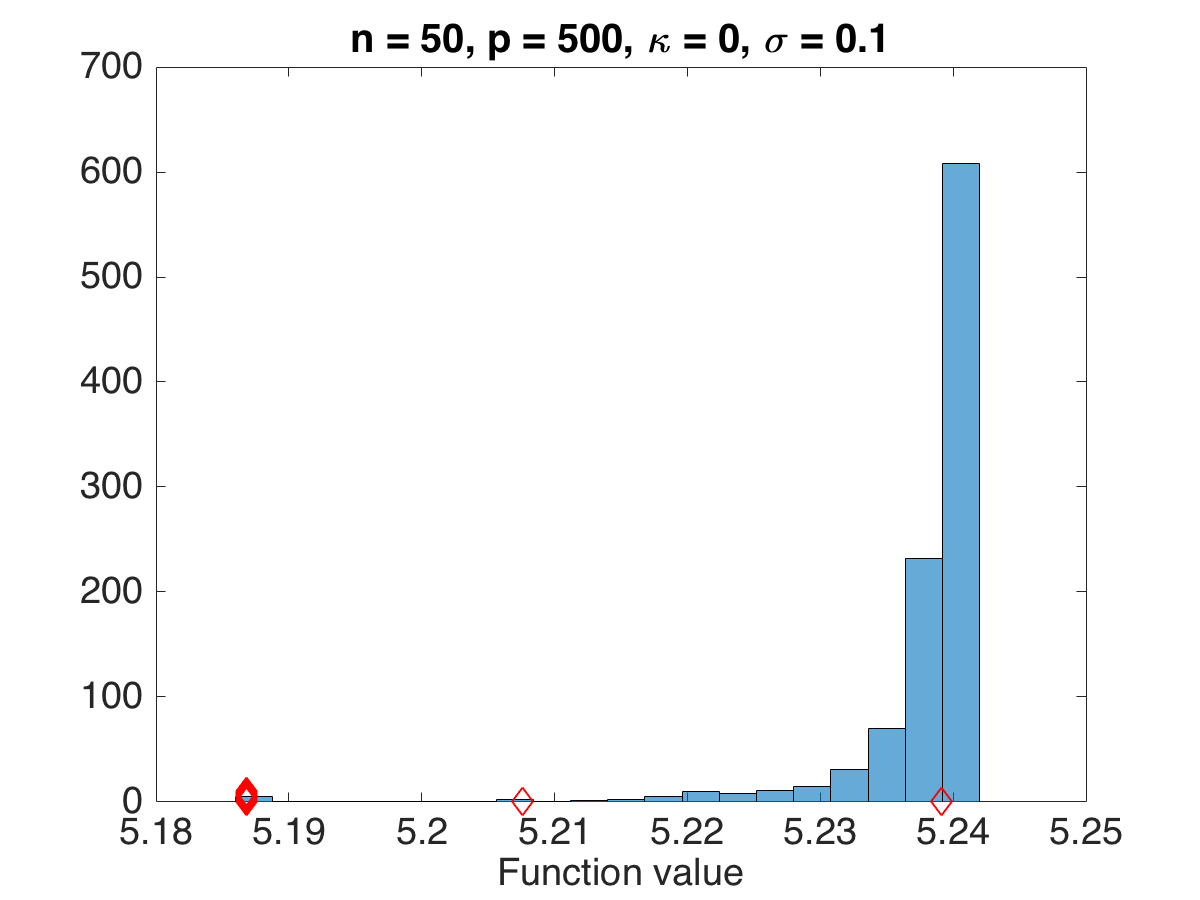}\\
  \includegraphics[width=0.32\linewidth]{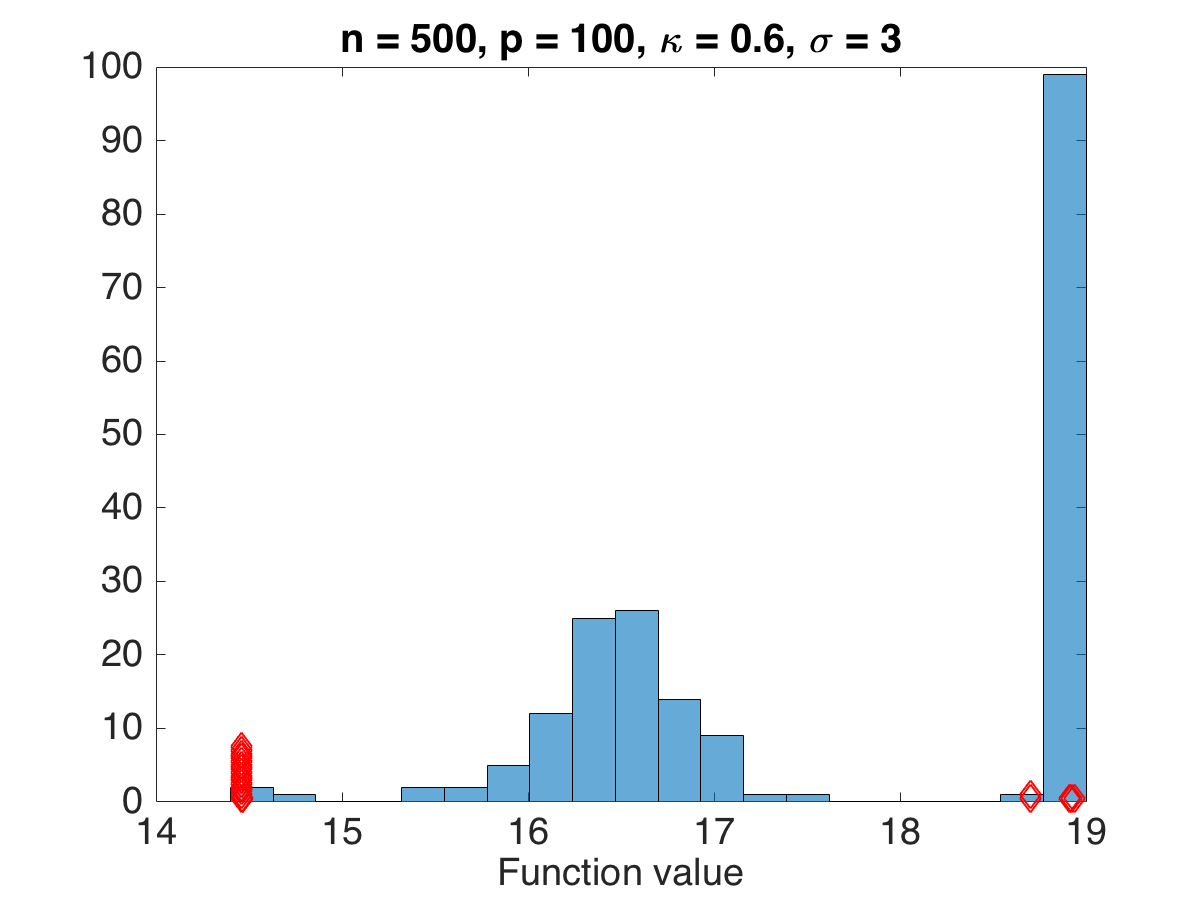}
    \includegraphics[width=0.32\linewidth]{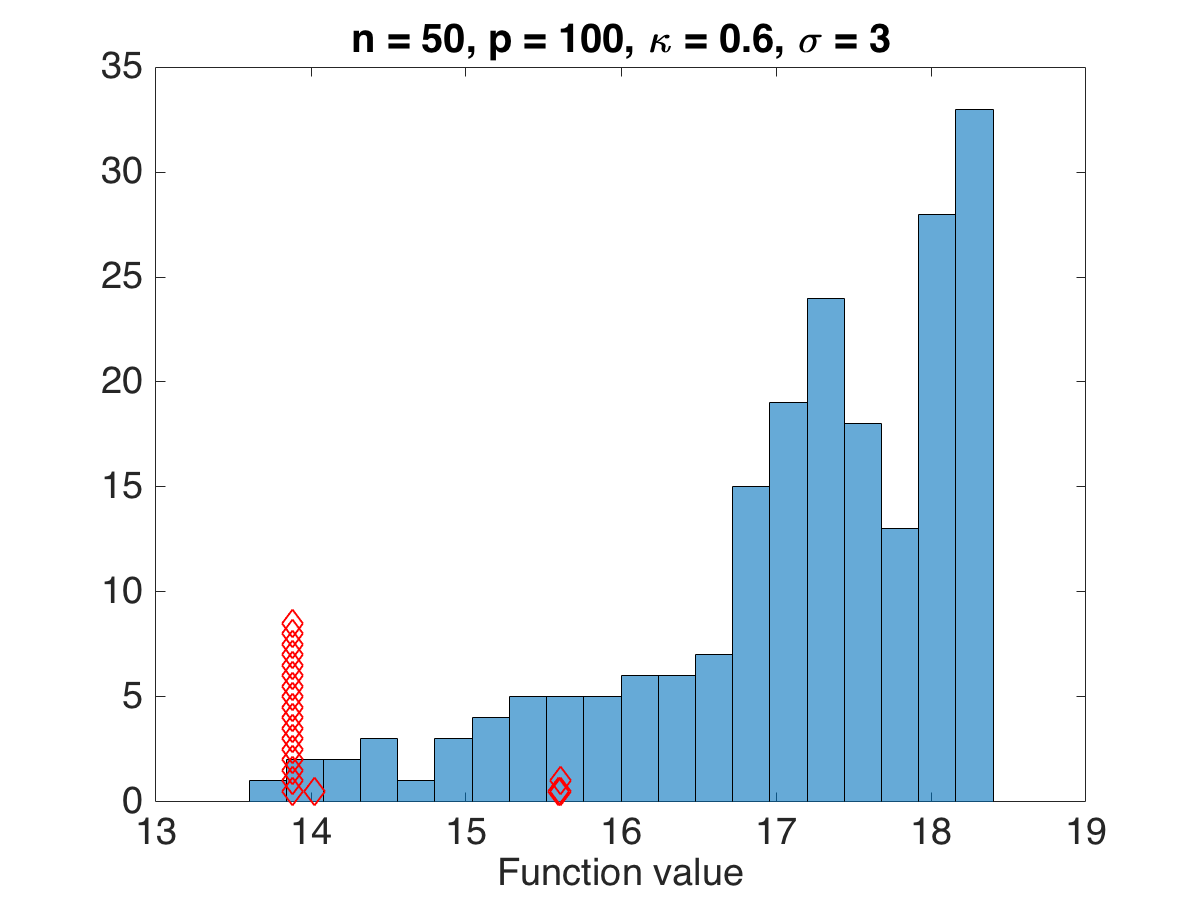}
  \includegraphics[width=0.32\linewidth]{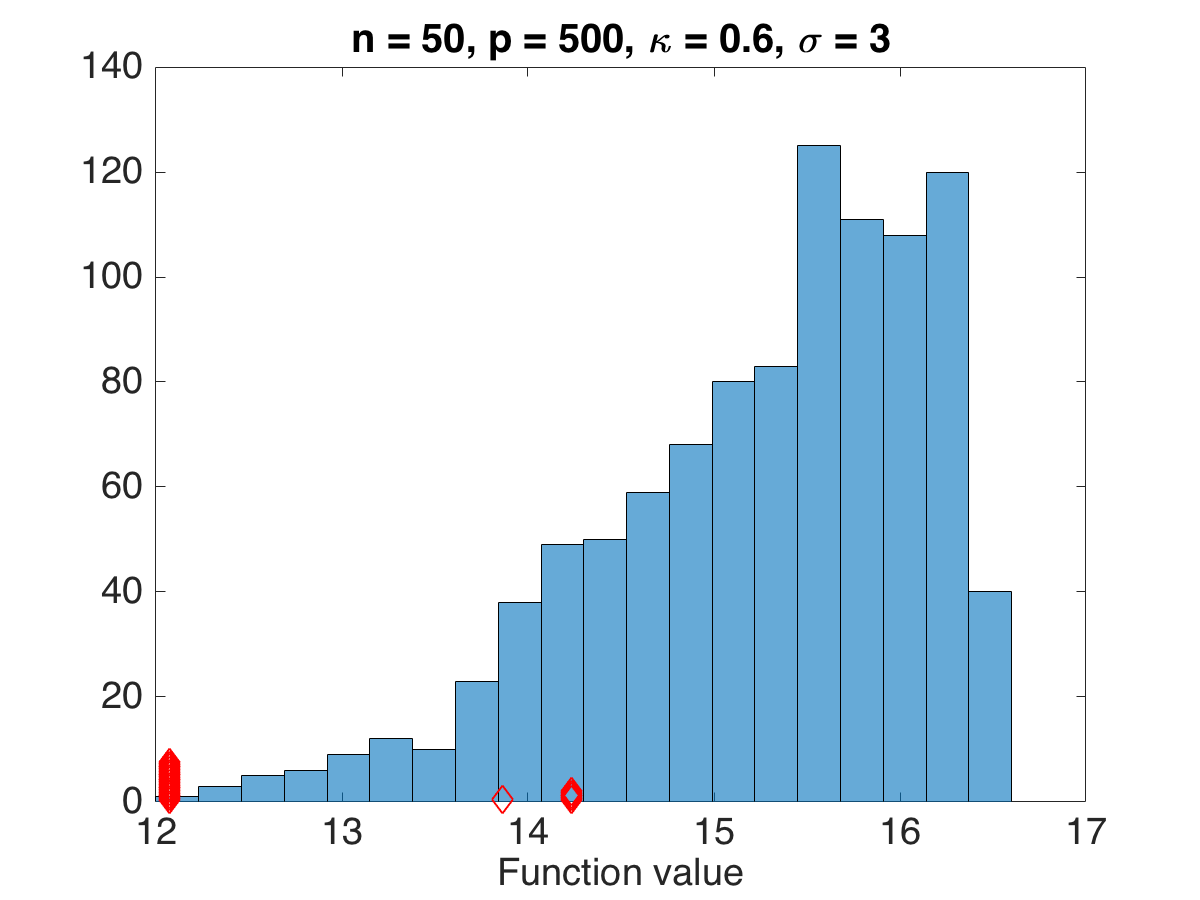}\\
  \includegraphics[width=0.32\linewidth]{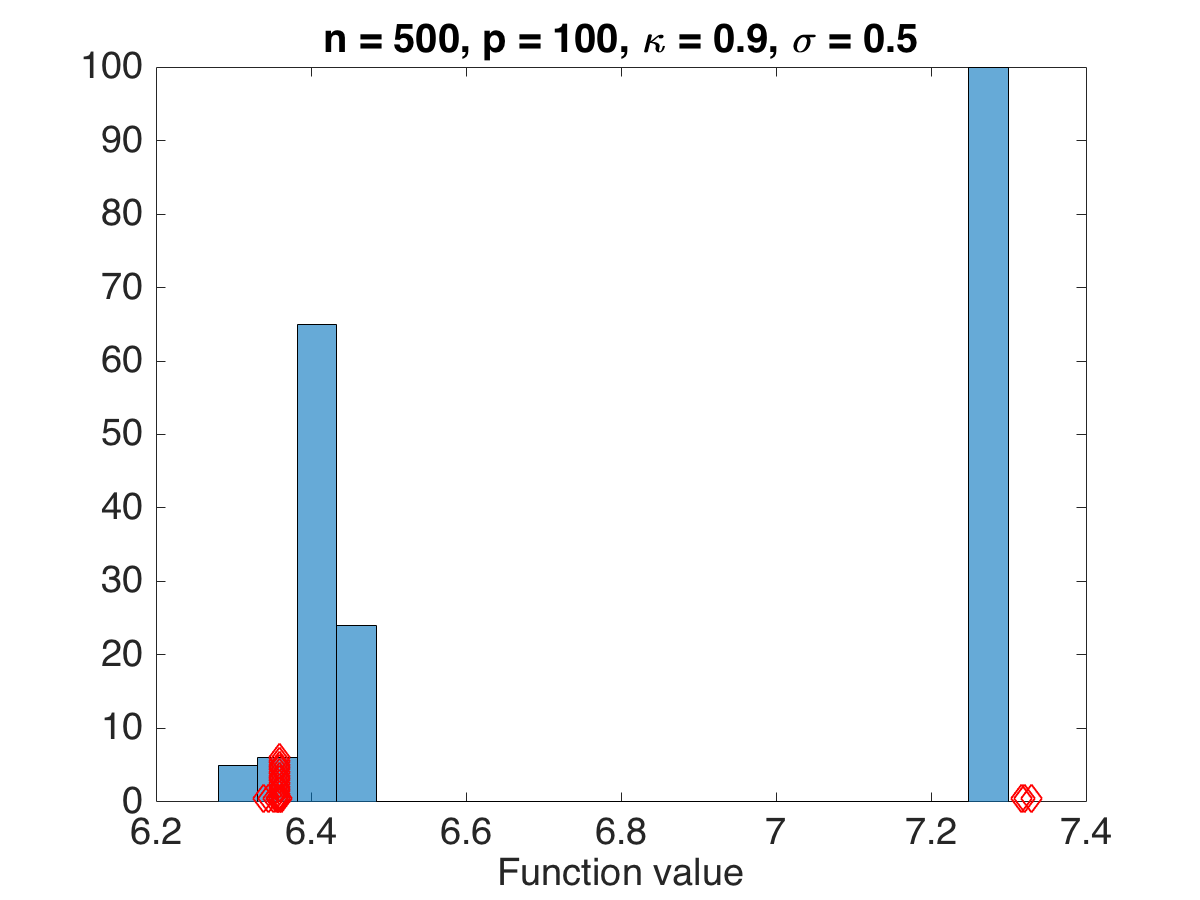}
    \includegraphics[width=0.32\linewidth]{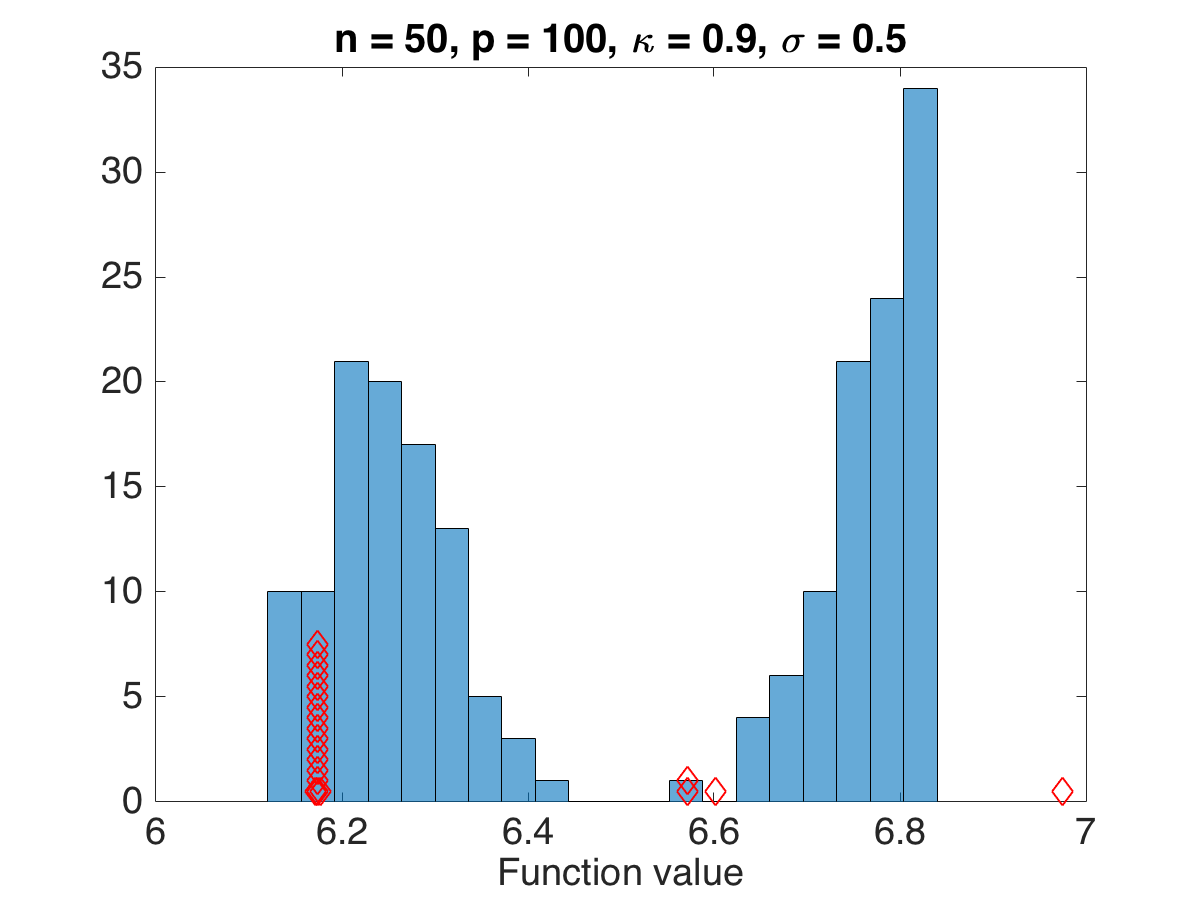}
  \includegraphics[width=0.32\linewidth]{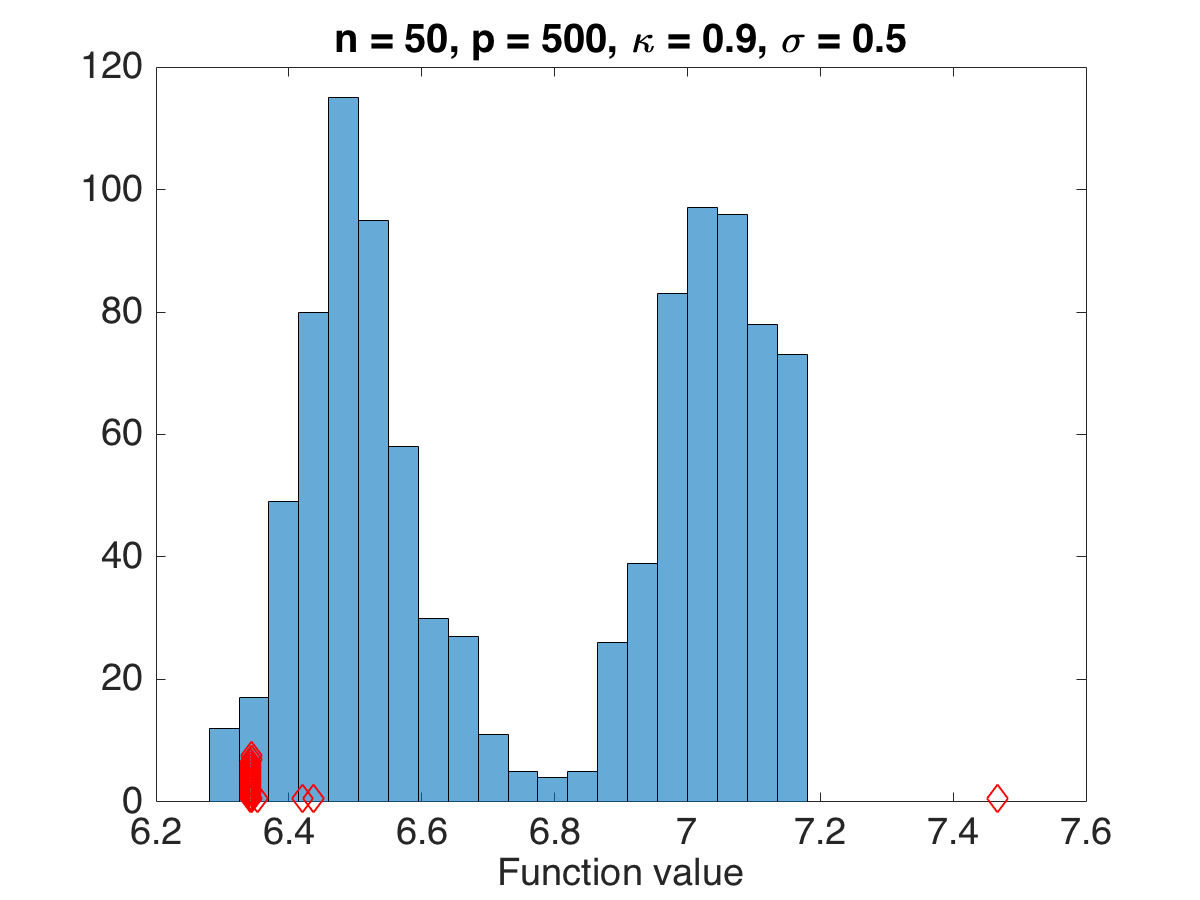}
  \caption{Histogram of $2p$ function values of c-TREX for one model instance. Diamond markers correspond to q-TREX function values from 21 random restarts, $(\kappa,\sigma)\in \{(0,0.1);(0.6,3);(0.9,0.5)\}$. The histogram for $p=500$, $\kappa=0.9$, $\sigma=0.5$ has an additional q-TREX marker at 14 (data point not shown).}
  \label{fig:topology}
\end{figure}

\begin{itemize}
    \item In the low $\kappa$, low $\sigma$ setting (top row of Figure~\ref{fig:topology}), the histogram is left-skewed and the global minimum values are clearly separated from the rest.
     
    \item In the moderate $\kappa$, high $\sigma$ setting (middle row), the histogram has a long left tail without clear separation between the values (unless $n$ is large).
    
    \item In the high $\kappa$ setting (bottom row), the histogram is bimodal regardless of the values of $n$, $p$ and $\sigma$.
\end{itemize}

Since the estimation error of both q-TREX and c-TREX strongly depends on the values of $(\kappa,\sigma)$, which are typically unknown, the above observations suggest that we can use the $2p$ function values from c-TREX to distinguish ``good" and ``bad" regimes in practice. 

Figure~\ref{fig:topology} also shows the function values from 21 random restarts of q-TREX superimposed (in red diamonds) on the histogram of the $2p$ function values of c-TREX. As expected from Figure~\ref{fig:prob-success}, in low $\kappa$ settings q-TREX attains the global minimum for the majority of starting points. In high $\kappa$ settings, q-TREX may fail to reach the objective value of c-TREX within $10^{-4}$ precision, with some starting points leading to objective values outside of the range of $2p$ values of c-TREX (see $p=500$, $\kappa=0.9$ and $\sigma=0.5$). 

Importantly, we also observe that in the low $\kappa$, low $\sigma$ setting, the indices $j$ of the $P^*_j=\min\{P^*(\consttrex x_j),~P^*(-\consttrex x_j)\}$ corresponding to the well-separated global or near-global optima coincide with the indices associated with non-zero $\beta_j$ in the true solution. Thus, inspection of these indices may give additional insights into which entries $\beta_j$ are potentially non-zero and motivates the use of $P^{*-1}_j$ as a measure of variable importance. In Section \ref{sec:knockoff}, we use this intuition to develop a procedure for false discovery rate control. 

\subsection{Topology of the TREX function on gene expression data}
\label{sec:topologyBsubtilis}
We next consider a real-world high-dimensional problem from genomics that has been introduced as a high-dimensional benchmark for linear regression in \cite{Buhlmann2014} and used in \citet{LedererMueller:14} to showcase q-TREX's ability to do meaningful variable selection. The c-TREX allows us to re-examine these previous results and analyze the topology of the TREX objective function in practice. The design matrix $X$ consists of $p = 4088$ gene expression profiles for $n = 71$ different strains of Bacillus subtilis (B. subtilis). The response $Y \in \real^{71}$ gives each strain's corresponding standardized riboflavin (Vitamin B) log-production rates. 
To get a rich picture of the topology, we solved the TREX problem ($\phi=0.5$) using the q-TREX ($q=40$) with $4088$ random restarts. We compared the resulting solutions to the $2p=8176$ c-TREX subproblems in terms of sparsity pattern and TREX function values. All numerical solutions have been thresholded at the level $\epsilon_t = 10^{-10}$ to discard ``numerical" zeros. Key results are summarized in Figure~\ref{fig:riboTopo}.        
\begin{figure}[!t] 
    \centering
  \includegraphics[width=0.94\linewidth]{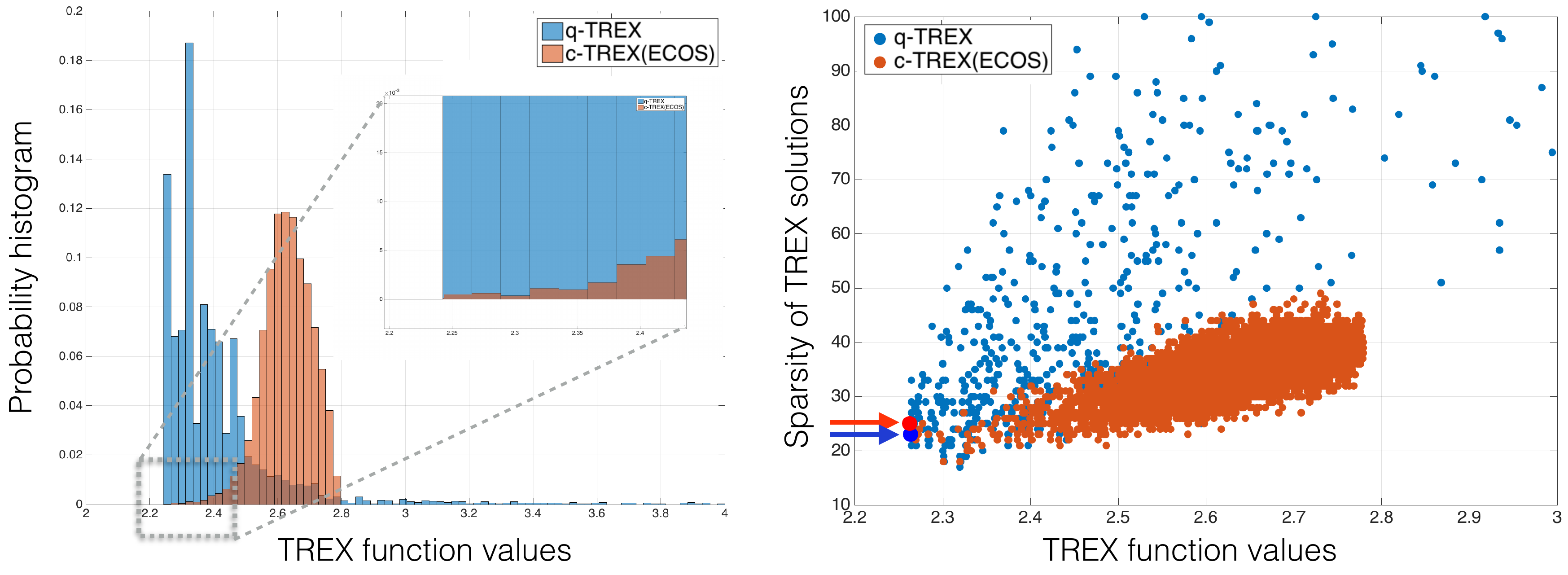}
  \caption{Topological properties of the TREX on the Riboflavin data. Left panel: Distribution of (locally) optimal function values for q-TREX (blue) and all $2p$ subproblems of c-TREX; Right panel: Solution sparsity vs. TREX function value. The optimal c-TREX solution has 25 non-zero entries (indicated by the red arrow), the optimal q-TREX solution has 23 non-zero entries (blue arrow).}
  \label{fig:riboTopo}
\end{figure}
The left panel shows histograms of the solutions from q-TREX and c-TREX. The shape of the histogram of c-TREX function values shows the previously observed long left tail, typical for moderate correlation in the design matrix and high variance. We observe that the histogram of q-TREX solutions is skewed right with the majority of solutions being close to the global c-TREX solution. For instance, $13\%$ of all q-TREX runs are within $10^{-3}$ of the global minimum $P^*$, suggesting that on the order of 10 
q-TREX restarts may suffice to get a TREX solution within this tolerance. The right panel of Figure~\ref{fig:riboTopo} shows a strong relationship between function value and sparsity of the solution, especially for the c-TREX subproblems. The c-TREX global solution $\hat \beta_\text{c-TREX}$ has $s=25$ non-zero entries (red arrow Figure~\ref{fig:riboTopo}, right panel) compared to $s=23$ non-zeros in the best q-TREX solution $\hat \beta_\text{q-TREX}$. Inspection of the two solutions reveals a core of 20 common variables (genes). Both solutions achieve similar prediction error ($6.77$ with $\hat \beta_\text{c-TREX}$ and $6.82$ with $\hat \beta_\text{q-TREX}$ after least-squares refitting on the respective support). When we threshold the entries of $\hat \beta_\text{c-TREX}$ and $\hat \beta_\text{q-TREX}$ at the level $\epsilon_t = 10^{-4}$, both solutions have identical support with $s=17$ non-zero entries. All estimated coefficients, their gene names, as well as further analysis are found in the Appendix \ref{sec:Bsubtilis}. 

\section{Knockoff Filtering with the TREX}
\label{sec:knockoff}

Our ability to achieve exact global minimization of the TREX objective and to inspect the solutions of all $2p$ TREX subproblems provides us with a wealth of knowledge about the problem structure that can be potentially exploited in the design of novel statistical inference schemes. \citet{LedererMueller:14} provided empirical evidence that q-TREX is a competitive tuning-free variable selection alternative to the lasso when $p>n$. We investigate here whether one can achieve tuning-free variable selection with the TREX when there are at least as many observations $n$ available as variables $p$. In particular, we are interested in designing a variable selection scheme that controls the {\em false discovery rate} (FDR), i.e., the expected proportion of false variables among the selected variables.

For this purpose, we propose combining the TREX with the knockoff filtering framework, which is a recently developed approach for performing variable selection with FDR control in the context of the statistical linear model \citep{Barber2015}. 
\subsection{Background on the Knockoff Filter}
The principal idea of the knockoff filter is to create fake ``knockoff'' versions of the features and to have these fabricated features compete with the real features that they mimic.
More specifically, the knockoff filter procedure involves three steps: (i) efficiently generating an artificial data matrix $\tilde X\in\real^{n\times p}$ that closely matches the overall correlation structure of the actual data matrix $X\in\real^{n\times p}$; (ii) solving the linear model with the augmented design matrix $[X\, \tilde X]\in\real^{n\times 2p}$; and (iii) calculating feature-specific statistics (with certain properties) $W_j=W_j([X\, \tilde X], Y)$, where large values of $W_j$ indicate that the $j$th original feature is competing well against its knockoff, providing evidence against the null that $\beta_j=0$.

The artificial data matrix $\tilde X\in \R^{n\times p}$ is constructed so that
$$
\tilde X^{\top}\tilde X=X^{\top}X \quad \mbox{and} \quad X^{\top}\tilde X = X^{\top}X - \diag\{\bf s\}
$$
for some $p$-dimensional nonnegative vector ${\bf s}$. Increasing the elements of $\bf s$ allows the knockoff variables in $\tilde X$ to be more distinct from their counterparts in the original matrix $X$, leading to better statistical power. We choose $\bf s$ with $s_j = s$ for all $j$, which corresponds to the equi-correlated knockoffs discussed in \citet{Barber2015}.

The construction of the artificial data $\tilde X$ suggests that solving a linear model for the augmented data matrix $[X\, \tilde X]\in\real^{n\times 2p}$ should lead to similar small values of $\hat \beta_j$ and $\hat \beta_{j+p}$ when variable $j$ is not in the model, and different values of $\hat \beta_j$ and $\hat \beta_{j+p}$ when variable $j$ is in the model (with $\hat \beta_j$ being the larger in magnitude). This intuition is used for the construction of statistics $W_j$ with large positive values giving evidence against $\beta_j = 0$. For example, one can use $W_j = |\hat \beta^{LS}_j| - |\hat \beta^{LS}_{j+p}|$, where $\hat \beta^{LS}$ is the least-squares estimator. However, many other choices are possible in combination with different variable selection procedures. For the lasso procedure, a natural choice (the default \texttt{lassoSignedMax} setting in the published software package \citealt{knockoff2015}) is defined as follows: Let $Z_j = \sup \{\lambda: \hat \beta_j(\lambda)\neq 0\}$ for $j=1,\ldots,2p$ with $\hat \beta_j(\lambda)$ being the solution of the lasso for a given $\lambda$ value on the augmented problem regressing $Y$ on $[X\, \tilde X]$, then set $W_j = \max({Z_j,Z_{j+p}}) \sign(Z_j - Z_{j+p})$.


\subsection{Proposed TREX-based Knockoff Statistics}

Using this intuition, we next introduce two novel knockoff statistics that can be used in combination with the TREX. Our first proposal is a statistic similar to \texttt{lassoSignedMax}, in which we introduce a ``path" version of the TREX where the scalar $\consttrex$ in~\eqref{eq:trex} is varied from high to low values. We measure the quantity $Z^{\consttrex}_j = \sup \{\consttrex: \hat \beta_j(\consttrex)\neq 0\}$ for $j=1,\ldots,2p$ where $\hat \beta_j(\consttrex)$ is the solution of the TREX for a given $\consttrex$ value on the augmented problem. The associated statistic is $W^{\consttrex}_j = \max({Z^{\consttrex}_j,Z^{\consttrex}_{j+p}}) \sign(Z^{\consttrex}_j - Z^{\consttrex}_{j+p})$. A second more natural statistic is derived from the collection of $2p$ function values of c-TREX with standard value $\consttrex=0.5$. Recall that we have associated for each variable index $j$ the solution $P^*_j=\min\{P^*(\consttrex x_j),~P^*(-\consttrex x_j)\}$. We observe in Section~\ref{sec:topology} 
that the indices $j$ of (near-)optimal $P^*_j$ correspond to the indices of non-zero $\beta_j$ in the true solution. We thus propose the function-value associated TREX measure $Z^{f}_j = P^{*-1}_j$ with associated knockoff statistic $W^{f}_j = \max({Z^{f}_j,Z^{f}_{j+p}}) \sign(Z^{f}_j - Z^{f}_{j+p})$. This statistic thus takes full advantage of the entire topology of the non-convex TREX function on the augmented problem.   

\subsection{Theoretical Justification}

\citet[Theorems~1 and 2]{Barber2015} show that applying a data-dependent threshold to statistics $W_j$ leads to provable FDR control assuming these statistics satisfy two properties: \textit{antisymmetry} and \textit{sufficiency} \citep[see Definitions~3 and~4 in][]{Barber2015}. The antisymmetry property states that swapping $X_j$ and $\tilde X_j$ for any $j\in\{1,...,p\}$ has the effect of changing the sign of the $j$th statistic. The TREX-based statistics $W^\consttrex=(W^\consttrex_1,...,W^\consttrex_p)$ and $W^f=(W^f_1,...,W^f_p)$ satisfy this property by construction as $\sign(W_j^\consttrex)=\sign(Z_j^\consttrex-Z_{j+p}^\consttrex)$ and $\sign(W_j^f)=\sign(Z_j^f-Z_{j+p}^f)$.

The sufficiency property states that the statistic only depends on the Gram matrix $[X\, \tilde X]^{\top}[X\, \tilde X]$ and on the feature response inner products $[X\, \tilde X]^{\top}Y$. Unfortunately, this property does not hold for the TREX-based statistics due to an additional dependence on $\|Y\|_2$. 
However, we show in what follows that the results in~\citet{Barber2015}, specifically Theorems~1 and~2, hold true also under a relaxed version of the sufficiency property, which we define below (and, importantly, this property is satisfied by the statistics $W^\phi$ and $W^f$ for the TREX as shown in Lemma \ref{lem:trexS} in the Appendix).

\begin{definition}\label{d:gsuf}
 A statistic $W$ is said to obey the {\em generalized sufficiency property} if $W$ depends on $(X,\tilde X, Y)$ only through the Gram matrix $[X\, \tilde X]^{\top}[X\, \tilde X]$, the feature response inner products $[X\, \tilde X]^{\top}Y$, and the norm of the response vector $\|Y\|_2$, that is,
 $$
 W=f([X\,\tilde X]^{\top}[X\,\tilde X],[X\,\tilde X]^{\top}Y, \|Y\|_2).
 $$
\end{definition}
\noindent 

We call this the ``generalized sufficiency property'' since \citet{Barber2015}'s sufficiency property is included as a special case.
The following theorem is the extension of \citet{Barber2015}'s Theorem~1 to this more general setting.
\begin{theorem}\label{t:FDR}
Let $W$ satisfy the antisymmetry and generalized sufficiency properties defined above. For any FDR target $q\in[0,1]$, define a data-dependent threshold $T$ as
\begin{equation*}
T=\min\left\{t\in \mathcal{W}:\frac{\#\{j:W_j\le -t\}}{\max(\#\{j:W_j\ge t\},1)}\le q \right\},
\end{equation*}
and chosen model $\hat S=\{j:W_j\ge T\}$. Here, $\mathcal W=\{|W_j|:j=1,\dots,p\}\setminus\{0\}$ is the set of unique non-zero absolute values of the elements of $W$. Then,
\begin{equation*}
\E\left[\frac{\#\{j:\beta_j=0\mbox{ and }j\in \hat S\}}{\#\{j:j\in \hat S\}+q^{-1}}\right]\le q.
\end{equation*}
\end{theorem}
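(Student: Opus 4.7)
The plan is to adapt the proof of Theorem~1 in \citet{Barber2015} by localizing the single place where the sufficiency property is invoked and verifying that the weaker generalized sufficiency property suffices there. The original argument proceeds in two stages: first, an exchangeability lemma shows that the signs $(\sign(W_j))_{j\in \mathcal{H}_0}$ at the null indices $\mathcal{H}_0=\{j:\beta_j=0\}$ are i.i.d.\ fair coin flips conditional on $|W|$ and on the signs at non-null indices; second, a martingale / sequential stopping-time argument turns this sign-exchangeability into the stated FDR bound. The second stage is purely combinatorial and uses only the exchangeability conclusion, so it transfers verbatim once the first stage is reestablished.

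The crux is therefore to redo the exchangeability lemma. For any subset $S\subseteq \mathcal{H}_0$, define the swapped matrix $[X\,\tilde X]_{\swap(S)}$ obtained by exchanging columns $X_j$ and $\tilde X_j$ for each $j\in S$. Two facts from \citet{Barber2015} are reused without change: (i) the swap preserves the Gram matrix, $[X\,\tilde X]_{\swap(S)}^\top[X\,\tilde X]_{\swap(S)}=[X\,\tilde X]^\top[X\,\tilde X]$; and (ii) at the distributional level, $[X\,\tilde X]_{\swap(S)}^\top Y \stackrel{d}{=} [X\,\tilde X]^\top Y$ when $S$ consists only of null indices. The new ingredient needed here is that the swap leaves $Y$ itself untouched (it acts on the columns of the design only), so in particular $\|Y\|_2$ is deterministically invariant under $\swap(S)$. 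Consequently the triple $\bigl([X\,\tilde X]^\top[X\,\tilde X],\,[X\,\tilde X]^\top Y,\,\|Y\|_2\bigr)$ has exactly the same joint distribution before and after the swap.

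Combining this swap-invariance with the generalized sufficiency property $W=f([X\,\tilde X]^\top[X\,\tilde X],\,[X\,\tilde X]^\top Y,\,\|Y\|_2)$ gives that $W$ computed on the swapped design has the same distribution as $W$ computed on the original design. Applying antisymmetry then flips exactly the coordinates $j\in S$ of $W$, while leaving the other coordinates unchanged. Since $S$ was an arbitrary subset of $\mathcal{H}_0$, this yields the required sign-exchangeability: conditional on $|W|$ and on $\sign(W_j)$ for $j\notin\mathcal{H}_0$, the vector $(\sign(W_j))_{j\in\mathcal{H}_0}$ is uniform over $\{\pm 1\}^{|\mathcal{H}_0|}$, hence i.i.d.\ fair coin flips.

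With this conditional distribution of null signs in hand, the remainder of the proof is identical to \citet{Barber2015}: one views $T$ as a stopping time with respect to the filtration that reveals $|W|$ together with signs from smallest to largest $|W_j|$, bounds the ratio in the FDR definition by a super-martingale of the form $\#\{j\in\mathcal{H}_0:W_j\le -t\}/(1+\#\{j\in\mathcal{H}_0:W_j\ge t\})$, and invokes the optional stopping inequality to obtain $\E[\cdot]\le q$. The main (and essentially only) obstacle is the first stage: being careful to verify that $\|Y\|_2$ is genuinely swap-invariant (it is, trivially) and that no other hidden dependence of $W$ on $(X,\tilde X,Y)$ sneaks through, so that the exchangeability argument proceeds unaltered. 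A separate lemma (Lemma~\ref{lem:trexS} cited in the text) then certifies that the TREX statistics $W^\phi$ and $W^f$ indeed satisfy Definition~\ref{d:gsuf}, making the theorem applicable to them.
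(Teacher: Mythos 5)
Your overall strategy matches the paper's: both localize the use of sufficiency to Barber--Cand\`es's sign-exchangeability lemma, reduce the problem to showing $W_{\swap(S)}\stackrel{d}{=}W$ for null subsets $S$, and let the martingale/stopping-time stage of the original proof carry over untouched. However, your argument for the key step contains a genuine gap. You assert that because (a) the Gram matrix is deterministically swap-invariant, (b) $[X\,\tilde X]_{\swap(S)}^{\top}Y\stackrel{d}{=}[X\,\tilde X]^{\top}Y$ marginally, and (c) $\|Y\|_2$ is literally the same random variable before and after the swap, the \emph{triple} $\bigl([X\,\tilde X]^{\top}[X\,\tilde X],[X\,\tilde X]^{\top}Y,\|Y\|_2\bigr)$ has the same joint distribution as its swapped counterpart. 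That inference is invalid in general: if $A\stackrel{d}{=}B$ and $C$ is a common random variable, it does not follow that $(A,C)\stackrel{d}{=}(B,C)$ (take $Y\sim\Ncal(0,1)$, $A=Y$, $B=-Y$, $C=Y$). Here $[X\,\tilde X]_{\swap(S)}^{\top}Y$ and $[X\,\tilde X]^{\top}Y$ are \emph{different functions} of the same $Y$ with equal marginal laws, so their joint dependence with $\|Y\|_2$ must actually be checked --- and this is precisely the new content the generalized sufficiency property demands beyond Barber--Cand\`es.

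The paper closes this gap with a dedicated lemma: using the swap-invariance of the Gram matrix and an SVD (Procrustes-type) argument, it constructs an orthogonal $R\in\R^{n\times n}$ with $R[X\,\tilde X]=[X\,\tilde X]_{\swap(S)}$. This realizes the column swap as a rotation of the response: $W_{\swap(S)}=f\bigl([X\,\tilde X]^{\top}[X\,\tilde X],[X\,\tilde X]^{\top}Y',\|Y'\|_2\bigr)$ with $Y'=R^{\top}Y$, so \emph{all three} arguments of $f$ are transformed coherently through the single substitution $Y\mapsto Y'$. One then shows $Y'\stackrel{d}{=}Y$ using the rotation invariance of the Gaussian noise together with $R^{\top}X\beta=X\beta$, which is where the nullity of $S$ enters. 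Your proposal is missing this construction (or any substitute for it, e.g.\ decomposing $\|Y\|_2^2$ into a deterministic function of $[X\,\tilde X]^{\top}Y$ plus an independent residual term using that the swap preserves the column span); without some such argument the claimed joint distributional invariance, and hence the exchangeability of null signs, is unproven.
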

\noindent 
The proof of Theorem~\ref{t:FDR} is deferred to Appendix \ref{sec:proof4_1}. This theorem provides theoretical support for the FDR control of the two proposed TREX-based knockoff filters. 
While our goal of course is to establish FDR control of the TREX knockoff filters, it should be noted that this new result also establishes FDR control for the square-root lasso \citep{sqrtlasso}, which does not satisfy the sufficiency property but does satisfy the generalized sufficiency property.  


\section{Experiments with TREX-based Knockoff Filtering}
\label{sec:empirical-knockoff}
\subsection{Empirical Validation of TREX Knockoff Procedures}

\begin{figure}[!t] 
    \centering
  \includegraphics[width=0.99\linewidth]{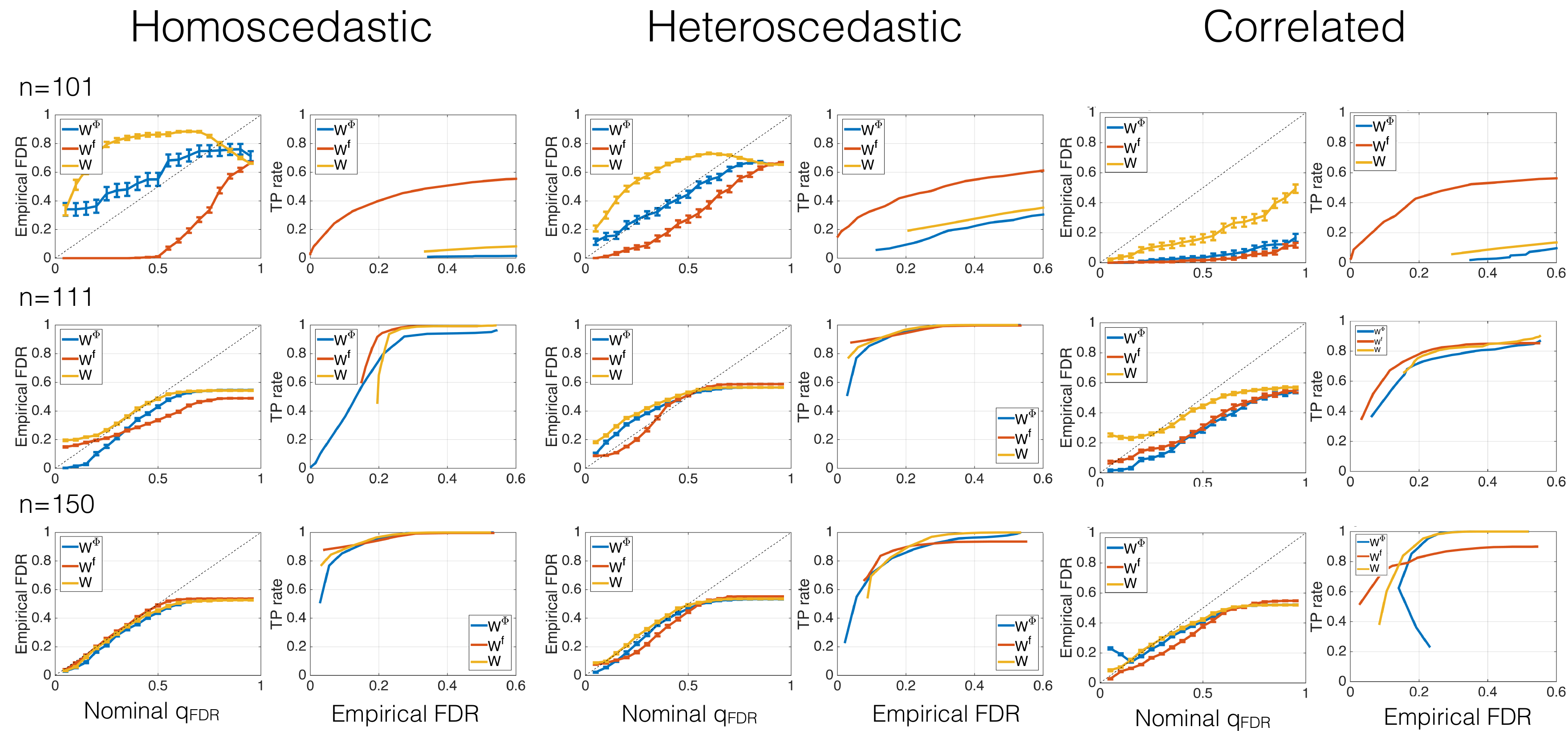}
  \caption{Mean and standard error of empirical FDR vs. nominal FDR level $q_\text{FDR}$, mean TP rates vs. empirical FDR under homoscedastic ($\sigma=1$) (left panel) and heteroscedastic noise ($\sigma_1=0.7$) (central panels), and correlated noise (right panels) for sample sizes $n\in\{101,111,150\}$ across all recorded FDR levels $q_\text{FDR}$ over 51 repetitions for TREX statistics $W^{f}$ (red) and $W^{\consttrex}$ (blue), and lasso-based statistic $W$ (\texttt{lassoSignedMax}) (yellow).}
  \label{fig:fdrTest}
\end{figure}

In 
the previous section,
we proposed two TREX-based knockoff filters and proved that they provide FDR control.  To corroborate these results empirically, we follow the experimental setup in \citet{Barber2015} and simulate synthetic data according to the linear model $Y_i=X_i^{\top}\beta+\varepsilon_i$, $i=1,\dots,n$ with $p=100$, $s=30$ nonzero variables, regression vector $\beta=3.5(1_s^T, 0_{p-s}^T)^T$, and feature vectors $X_j\sim N(0,\Sigma)$ with $\Sigma_{jj}=1$, $j=1,\ldots,p$ and $\Sigma_{jk}=\kappa$ for $j\neq k$ with $\kappa=0.3$. We consider three noise scenarios: homoscedastic noise with errors $\varepsilon_i\sim N(0,\sigma^2)$ with $\sigma=1$, heteroscedastic noise with errors uniformly drawn from either $\varepsilon_i\sim N(0,\sigma_1^2)$ or $\varepsilon_i\sim N(0,\sigma_2^2)$ with $\sigma_1=0.7$ and $\sigma^2_2=2-\sigma_1^2$, and correlated noise with $\varepsilon_i\sim N(0,\Sigma^{\epsilon})$ with $\Sigma^{\epsilon}$ having the same structure as $\Sigma$. Each column $X_j$ is centered and normalized. We vary the sample size $n\in\{101,111,150\}$ and record the number of true positives (TPs) and the empirical FDR at nominal levels $q_\text{FDR}=\{0.05,0.1,\ldots,0.95\}$ for $r=51$ repetitions. We ran c-TREX with $\consttrex=0.5$ for the $W^{f}$ statistic and q-TREX over the $\consttrex$-path $\consttrex \in\{0.1,0.15,\dots,1.45,1.5\}$ for $W^{\consttrex}$. 
We show empirical FDR and TP results when using the TREX knockoff filtering with statistics $W^{f}$, $W^{\consttrex}$, and lasso-based knockoff filtering with $W$ in Figure~\ref{fig:fdrTest}. 

Under homoscedastic noise, we observe that both novel statistics obey the nominal FDR at comparable power across most samples sizes and nominal levels. For the lowest possible sample size ($n=101$), only the $W^{f}$ statistic obeys the nominal FDR under all noise scenarios. Under heteroscedastic and correlated noise, the $W^f$ statistic has consistently higher TP rate than the other statistics for sample size $n=101$. For larger sample size, all statistics have similar power at comparable empirical FDR with $W^f$ being the most conservative.

\subsection{An Application to HIV-1 Data}

\begin{figure}[!t] 
    \centering
  \includegraphics[width=0.99\linewidth]{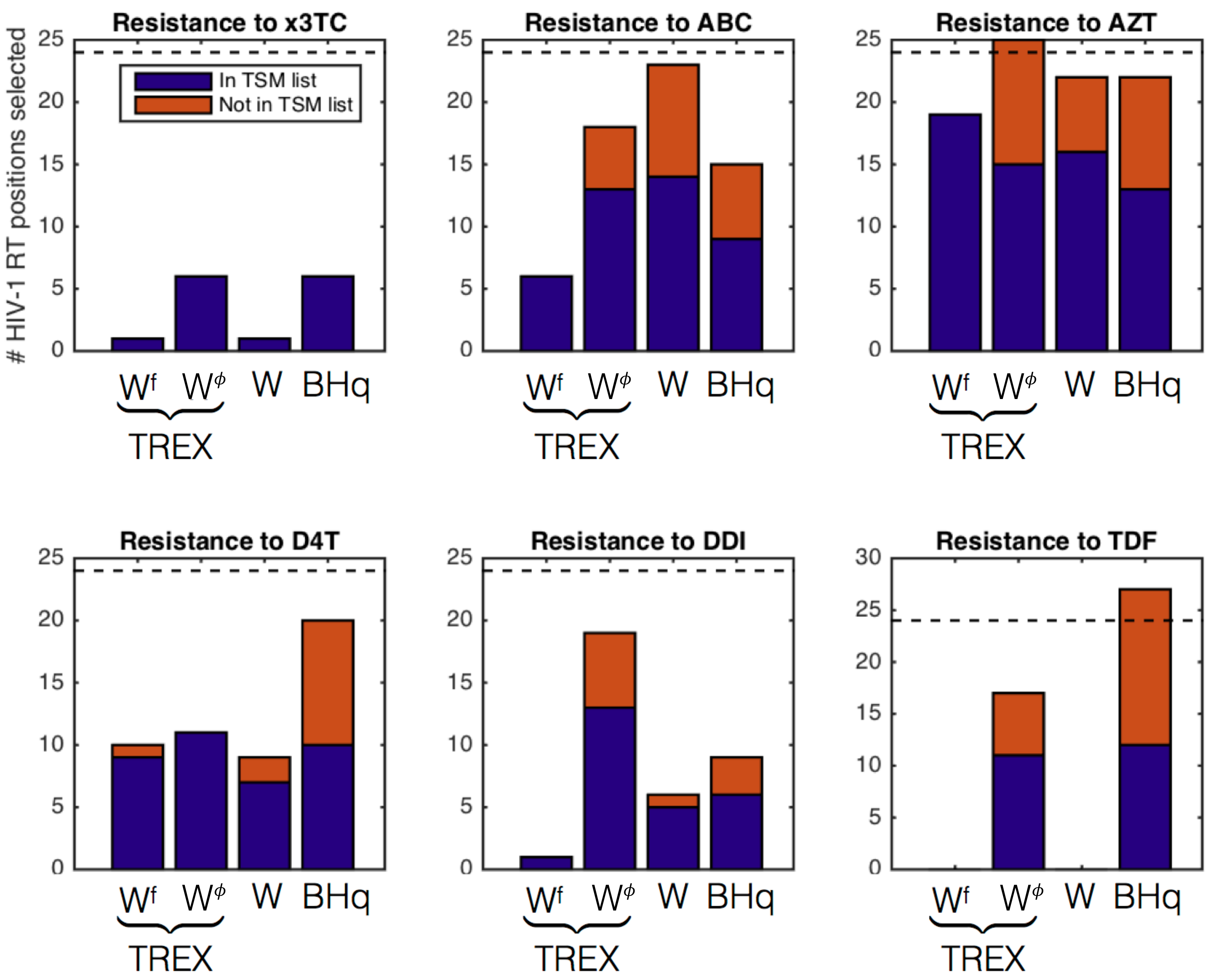}
  \caption{Number of positions on target NRTI for six drugs selected by TREX knockoff filtering with $W^{f}$ and $W^\consttrex$ statistic, lasso knockoff filtering with \texttt{lassoSignedMax} statistic $W$, and the Benjamini-Hochberg (BHq) method at target FDR $q_\text{FDR}=0.2$. The dashed horizontal line denotes the total number of positions in the TSM list. Data dimensions $n$ and $p$ for each experiment can be found in the lower panel of Figure~\ref{fig:HIV_PI}.}
  \label{fig:HIV_NRTI}
\end{figure}

We next apply TREX knockoff filtering to the task of inferring mutations in the Human Immunodeficiency Virus Type 1 (HIV-1) that are associated with drug resistance. The original data set \citep{Rhee2006} comprises drug resistance measurements and genotype information from samples of HIV-1 proteins. Separate data sets are available for resistance to six protease inhibitors (PIs), to six nucleoside reverse-transcriptase (RT) inhibitors (NRTIs), and to three non-nucleoside RT inhibitors (NNRTIs), respectively. The sample size of the different data sets ranges from $n=329$ to $843$ (see Figure~\ref{fig:HIV_PI} lower panel for details). Following \citep{Barber2015}, we analyze each drug separately using statistical linear models. The response $Y$ is given by log-fold changes of measured drug resistances. The design matrix $X$ with entries $X_{ij}\in\{0,1\}$ indicates absence or presence of (at least two) mutations at the $j$th genotyped position in the RT or protease, where distinct mutations at the same position are treated as additional separate features. In the absence of a ground truth for this real-world data set, we follow \citep{Barber2015} and compare the list of inferred mutations using knockoff filtering with lists of treatment-selected mutation (TSM) panels \citep{Rhee2006}. These TSM lists comprise all mutations that are present at significantly higher frequency in virus samples from previously treated patients compared with untreated control groups. Although these lists are target (RT, NRTI, NNRTI) but not drug specific, they still serve as a helpful proxy to the set of true positives across all tested drugs. We here apply TREX knockoff filtering with $W^{f}$ and $W^\consttrex$ statistic, lasso knockoff filtering with \texttt{lassoSignedMax} statistic $W$, and the Benjamini-Hochberg (BHq) procedure \citep{Benjamini1995} for target FDR $q_\text{FDR}=0.2$.             
\begin{figure}[!t] 

    \centering
  \includegraphics[width=0.94\linewidth]{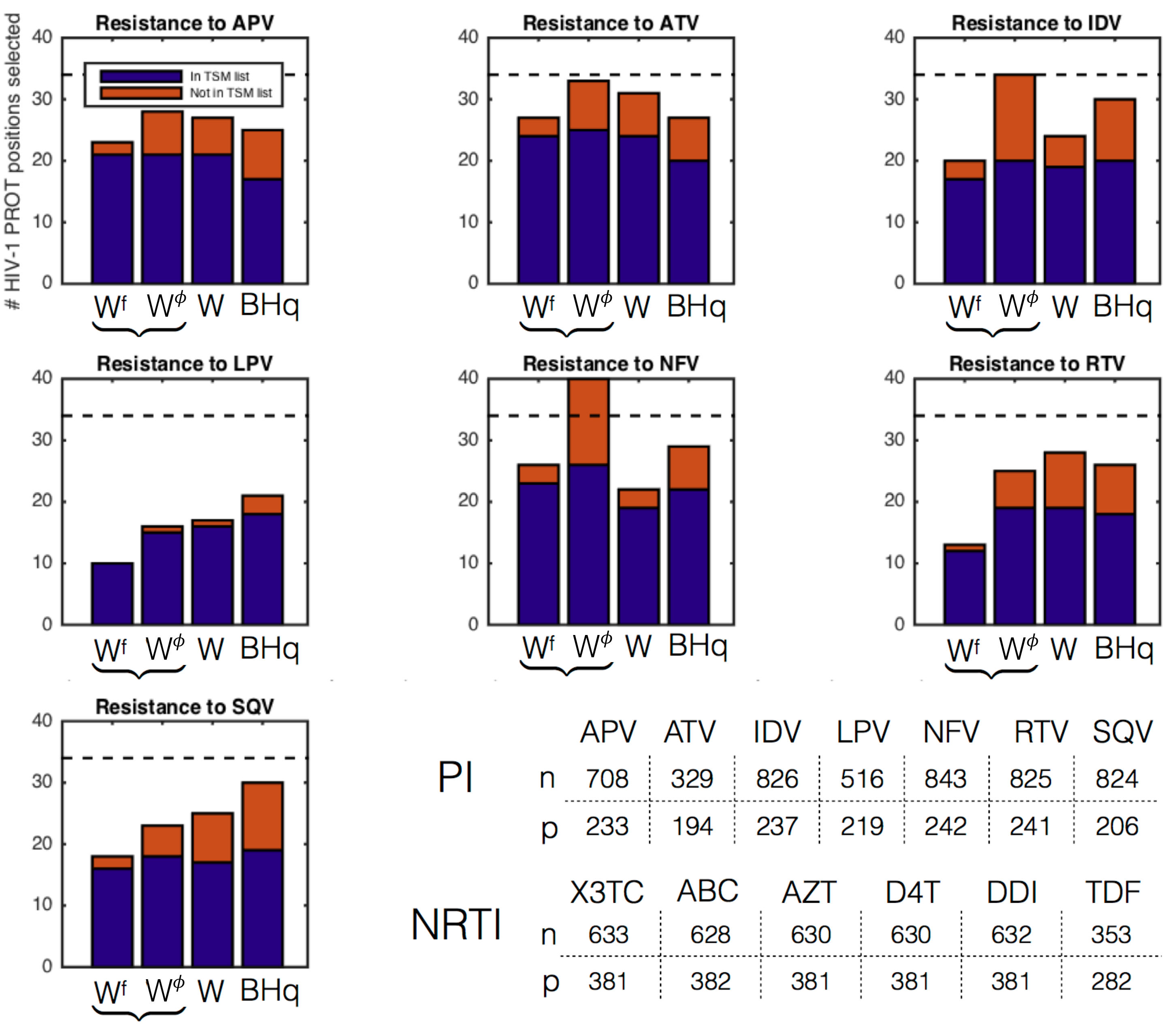}
  \caption{Number of positions on target PI for six drugs selected by TREX knockoff filtering with $W^{f}$ and $W^\consttrex$ statistic, lasso knockoff filtering with \texttt{lassoSignedMax} statistic $W$, and the Benjamini-Hochberg (BHq) method at target FDR $q_\text{FDR}=0.2$. The dashed horizontal line denotes the total number of positions in the TSM list. The bracket marks the two TREX-based statistics. The lower panel shows the dimensionality of each problem.}
  \label{fig:HIV_PI}
\end{figure}

Figure~\ref{fig:HIV_NRTI} summarizes the TSM recovery performance of all methods for mutations in NRTI across all six drugs. We observe several differences among the methods. First, the TREX with $W^{f}$ statistic selects only mutations from the TSM list across all drugs (except one additional for D4T). For x3TC and TDF it selects one or no mutation (identical to the performance of the lasso). The TREX shows remarkable performance for AZT (the first successful NRTI drug) where it recovers 19 out of 24 mutations without reporting any mutation outside the TSM list. 
Using TREX with selection along the $\consttrex$-path and $W^{\consttrex}$ results in a less conservative variable selection procedure. Moreover, it compares favorably to its lasso analog with increased power on TDF, x3TC, and DDI. The BHq and TREX with $W^\consttrex$ are the only methods that select mutations for the drug TDF.

We next analyze TSM recovery performance of all methods for mutations in PI across seven tested drugs (see Figure~\ref{fig:HIV_PI}). We observe a similar trend as in the NRTI example. TREX Knockoff filtering with $W^{f}$ recovers mostly variables from the TSM list at similar power compared to all other methods. The performance of the TREX with $W^\consttrex$ statistic is again comparable to knockoff filtering with the lasso. Finally, on the NNRTI test case (data not shown) all methods show similar variable selection behavior with about one half of the predictions present in the TSM lists with additional novel mutational positions.

\section{Discussion}
\label{sec:discussion}

In this paper, we introduce a new algorithm, called c-TREX, that is guaranteed to attain the global minimum of the non-convex TREX problem.  Having access to the true global minimum is extremely rare in non-convex optimization.  We use this new ability to investigate the performance of a previously proposed heuristic, the q-TREX, in a way that is typically impossible in other non-convex problems.  We observe that q-TREX's success in attaining the global minimum is affected by various parameters of the underlying model such as the error variance and the correlation between features.  We do, however, observe that in terms of statistical performance the c-TREX and q-TREX estimators are on par, suggesting that q-TREX's sub-optimality in terms of the TREX objective may not negatively affect its performance as an estimator.  

The c-TREX algorithm involves solving $2p$ separate convex problems that are based on the original TREX problem. The convex problems belong to the class of second-order cone programs (SOCPs). We have used two state-of-the-art SOCP solvers: ECOS (Embedded Conic Solver, \citealt{domahidi2013ecos}), an interior-point solver, and SCS (Splitting Conic Solver, \citealt{odonoghue2015conic}), a first-order method. Our empirical investigations show that these solvers are able to solve c-TREX within reasonable time but are not yet competitive with the q-TREX heuristic. An interesting line of future research is thus to design dedicated SOCP solvers that use the special structure of the TREX problem as well as proximal algorithms \citep{Combettes2016}.     

Our analysis of the TREX problem landscape shows that having access to all $2p$ TREX solutions is a rich source for insight about the underlying model. We observe that the ``topology" of these solutions appears to differ in an informative way depending on the problem regime. We observe that (i) the distribution of $2p$ function values associated with the solutions becomes increasingly multi-modal with statistical problem difficulty and (ii) the $2p$ function values permit a novel ranking scheme for variable importance.

Another major contribution of this work is that we show that the knockoff filter can be applied to the TREX, leading to two new procedures for controlling the FDR for variable selection.  One of our knockoff statistics that performs particularly well makes explicit use of the $2p$ solutions computed in the c-TREX algorithm.  Our empirical study corroborates that FDR is controlled at the nominal level and offers promising evidence on synthetic and real-world data that a strong ability to detect true positives is maintained.  



\bigskip
\begin{center}
{\large\bf ONLINE MATERIAL}
\end{center}

\begin{description}


\item[MATLAB-package for TREX routine:] github.com/muellsen/TREX

\end{description}

\bibliographystyle{agsm}
\bibliography{references}

\begin{appendix}

\bigskip
\begin{center}
{\large\bf APPENDIX}
\end{center}

\section{Generalized Sufficiency Property of \texorpdfstring{$W^\consttrex$}~~and \texorpdfstring{$W^f$}~}

\begin{lemma}\label{lem:trexS}
The statistics $W^\consttrex$ and $W^f$ can be written as
$$
W^\consttrex=g^\consttrex\left([X\, \tilde X]^{\top}[X\, \tilde X],[X\, \tilde X]^{\top}Y,\|Y\|_2\right)\quad\mbox{and}\quad W^f=g^f\left([X\, \tilde X]^{\top}[X\, \tilde X],[X\, \tilde X]^{\top}Y,\|Y\|_2\right)
$$
for some $g^\consttrex:S^+_{2p}\times R^{2p}\times R\to R^p$ and $g^f:S^+_{2p}\times R^{2p}\times R\to R^p$, where $S^+_{2p}$ is the cone of $2p\times 2p$ positive semidefinite matrices.
\end{lemma}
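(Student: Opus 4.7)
\textbf{Proof plan for Lemma~\ref{lem:trexS}.} The plan is to show that the TREX optimization problem on the augmented design $[X\,\tilde X]$ is itself a function of the triple $([X\,\tilde X]^{\top}[X\,\tilde X],\,[X\,\tilde X]^{\top}Y,\,\|Y\|_2)$, and that both $W^\consttrex$ and $W^f$ are then derived quantities of the TREX objective. The key algebraic observation is the expansion
\begin{equation*}
\|Y-[X\,\tilde X]\beta\|_2^2 \;=\; \|Y\|_2^2 \;-\; 2\bigl([X\,\tilde X]^{\top}Y\bigr)^{\!\top}\!\beta \;+\; \beta^{\top}\bigl([X\,\tilde X]^{\top}[X\,\tilde X]\bigr)\beta,
\end{equation*}
together with $[X\,\tilde X]^{\top}(Y-[X\,\tilde X]\beta) = [X\,\tilde X]^{\top}Y - [X\,\tilde X]^{\top}[X\,\tilde X]\beta$. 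Thus both the TREX numerator and denominator in \eqref{eq:trex} applied to $[X\,\tilde X]$ are explicit functions $h_{\rm num}(G,c,\|Y\|_2;\beta)$ and $h_{\rm den}(G,c;\beta)$ of the Gram matrix $G:=[X\,\tilde X]^{\top}[X\,\tilde X]$ and the cross-product vector $c:=[X\,\tilde X]^{\top}Y$, with dependence on $\|Y\|_2$ appearing only through the constant term $\|Y\|_2^2$ in the numerator.

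For the statistic $W^f$, I would apply this expansion to each of the $4p$ convex subproblems generated by c-TREX on $[X\,\tilde X]$ in Algorithm~\ref{alg:ctrex}. Each subproblem \eqref{eq:convex} uses $v=s\consttrex e_k^{\top}[X\,\tilde X]$ for some sign $s\in\{\pm1\}$ and column index $k\in\{1,\dots,2p\}$; its objective and feasibility constraint involve $v^{\top}(Y-[X\,\tilde X]\beta)=s\consttrex(c_k - G_{k,\cdot}\beta)$, which again depends only on $(G,c)$. Hence every subproblem's optimal value $P^*(s\consttrex x_k)$ is a function of $(G,c,\|Y\|_2)$, and so is $P^*_k=\min\{P^*(\consttrex x_k),P^*(-\consttrex x_k)\}$. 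The map $Z_k^f = (P^*_k)^{-1}$ and the subsequent knockoff combination $W^f_k = \max(Z_k^f,Z_{k+p}^f)\,\sign(Z_k^f-Z_{k+p}^f)$ are then built entirely from these quantities, giving the desired function $g^f$.

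For the statistic $W^\consttrex$, the argument is analogous but applied along the regularization path. For each fixed $\consttrex$, the global minimizer $\hat\beta(\consttrex)$ of the augmented TREX problem is (up to ties) a measurable function of $(G,c,\|Y\|_2)$ by the expansion above. Consequently the entry-level activation threshold $Z_k^\consttrex = \sup\{\consttrex:\hat\beta_k(\consttrex)\neq 0\}$ is itself a function of $(G,c,\|Y\|_2)$, and $W_k^\consttrex=\max(Z_k^\consttrex,Z_{k+p}^\consttrex)\,\sign(Z_k^\consttrex-Z_{k+p}^\consttrex)$ inherits this dependence, providing $g^\consttrex$.

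The main obstacle is handling potential non-uniqueness: the TREX objective is non-convex, and multiple global (or path-level) minimizers could exist, so $\hat\beta(\consttrex)$ need not be a single-valued function of the data \emph{a priori}. I would address this by fixing a deterministic selection rule (for example, taking the c-TREX minimizer as the one with smallest subproblem index $(k,s)$ in lexicographic order breaking ties), which depends only on the ordered list of subproblem optima and is therefore measurable with respect to $(G,c,\|Y\|_2)$. Once this tie-breaking rule is fixed, the supports along the $\consttrex$-path and the value-based rankings $P_k^*$ are uniquely determined, and the two displayed functional forms follow immediately from the expansions above.
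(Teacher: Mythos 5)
Your proposal is correct and takes essentially the same approach as the paper: the paper's proof consists precisely of the quadratic expansion $\|Y-[X\,\tilde X]\beta\|_2^2=\|Y\|_2^2-2\beta^{\top}[X\,\tilde X]^{\top}Y+\beta^{\top}[X\,\tilde X]^{\top}[X\,\tilde X]\beta$ together with the observation that the denominator and each convex subproblem likewise depend only on the Gram matrix and cross-products. Your additional care about propagating this through the $4p$ subproblems, the $\consttrex$-path, and a deterministic tie-breaking rule for non-unique minimizers goes beyond the paper's two-sentence argument but does not change the route.
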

\begin{proof}
The TREX criterion~\eqref{eq:trex} can be rewritten as 
\begin{align*}
    \min_{\beta\in\real^p}\left\{\frac{\|Y\|_2^2-2\beta^{\top}X^{\top}Y+\beta^{\top}X^{\top}X\beta}{\|X^\top Y-X^{\top}X\beta\|_\infty}+\consttrex\|\beta\|_1\right\},
\end{align*}
where the objective function depends on the data only through $X^{\top}X$, $X^{\top}Y$, and $\|Y\|_2$. By construction, this is also true for each convex subproblem~\eqref{eq:convex}.
\end{proof}

\spacingset{1.45}

\section{Proof of Theorem~4.1}
\label{sec:proof4_1}
\begin{lemma}\label{l:Rswap}
There exists an orthogonal matrix $R\in \R^{n\times n}$ such that
$$
R[X\, \tilde X] = [X\, \tilde X]_{\swap(S)}.
$$
\end{lemma}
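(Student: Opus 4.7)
The plan is to exploit the defining identities of the equi-correlated knockoff construction, namely $\tilde X^\top\tilde X = X^\top X$ and $X^\top\tilde X = X^\top X - \diag(s)$ (with $\diag(s)$ symmetric), to show that the Gram matrix $G:=[X\,\tilde X]^\top[X\,\tilde X]$ is invariant under any swap, and then to invoke the standard linear-algebra fact that two matrices with identical Gram matrices differ by a left orthogonal factor.

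First I would encode the swap as right multiplication by a $2p\times 2p$ permutation matrix $P_S$, so that $[X\,\tilde X]_{\swap(S)} = [X\,\tilde X]\,P_S$, and verify that $P_S^\top G P_S = G$. Writing $G$ in its natural $2\times 2$ block form
$$
G = \begin{pmatrix} X^\top X & X^\top X - \diag(s)\\ X^\top X - \diag(s) & X^\top X \end{pmatrix},
$$
a short block-entrywise check shows that simultaneously interchanging row/column $j$ with row/column $p+j$ (for any $j \in S$) leaves $G$ unchanged: the two diagonal blocks are equal, the two off-diagonal blocks are equal and symmetric, and for $k\neq j$ the four entries $G_{jk},G_{j,p+k},G_{p+j,k},G_{p+j,p+k}$ all coincide. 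Composing single-index swaps yields $P_S^\top G P_S = G$ for arbitrary $S\subseteq\{1,\ldots,p\}$.

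Second, given $A := [X\,\tilde X]$ and $B := [X\,\tilde X]\,P_S$ with $A^\top A = B^\top B$, I would construct the required orthogonal $R\in\R^{n\times n}$ by isometry extension. Define $R$ on $\mathrm{col}(A)$ by $R(Ax):=Bx$. This is well-defined and an isometry because
$$
\|B(x_1-x_2)\|_2^2 = (x_1-x_2)^\top B^\top B(x_1-x_2) = (x_1-x_2)^\top A^\top A(x_1-x_2) = \|A(x_1-x_2)\|_2^2,
$$
so $Ax_1=Ax_2$ forces $Bx_1=Bx_2$, and the analogous polarization identity gives $\langle Bx_1, Bx_2\rangle = \langle Ax_1, Ax_2\rangle$. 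Since $\mathrm{rank}(A)=\mathrm{rank}(A^\top A)=\mathrm{rank}(B^\top B)=\mathrm{rank}(B)$, the orthogonal complements $\mathrm{col}(A)^\perp$ and $\mathrm{col}(B)^\perp$ have equal dimension, so $R$ extends to an orthogonal endomorphism of $\R^n$ by mapping any orthonormal basis of the former bijectively onto any orthonormal basis of the latter. By construction $RA = B$, which is the claim. The only mildly delicate step is this extension, and it is justified immediately by the rank identity implied by $A^\top A = B^\top B$.
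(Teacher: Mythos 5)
Your proof is correct, but it reaches the conclusion by a different route than the paper. Both arguments hinge on the same key fact --- that the Gram matrix $G=[X\,\tilde X]^\top[X\,\tilde X]$ is invariant under $\swap(S)$ --- but you prove that invariance explicitly from the knockoff identities $\tilde X^\top\tilde X=X^\top X$ and $X^\top\tilde X=X^\top X-\diag\{\bf s\}$ via the block structure of $G$ and the permutation matrix $P_S$, whereas the paper simply invokes it. Where you then diverge is in producing $R$: the paper takes the orthogonal Procrustes route, setting $R=U_mV_m^\top$ from the full SVD of $M=[X\,\tilde X]_{\swap(S)}[X\,\tilde X]^\top$ and verifying $R[X\,\tilde X]=[X\,\tilde X]_{\swap(S)}$ by expanding the squared Frobenius norm of the difference and showing, through a computation of $M^\top M$ and a matching of singular values, that it equals $\Tr\bigl(2G-2D_m\bigr)=0$. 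You instead use the classical isometry-extension argument: since $A^\top A=B^\top B$ for $A=[X\,\tilde X]$ and $B=[X\,\tilde X]P_S$, the map $Ax\mapsto Bx$ is a well-defined linear isometry of $\mathrm{col}(A)$ onto $\mathrm{col}(B)$, and the rank identity $\mathrm{rank}(A)=\mathrm{rank}(A^\top A)=\mathrm{rank}(B^\top B)=\mathrm{rank}(B)$ lets you extend it orthogonally to all of $\R^n$. Your version is arguably cleaner and more self-contained: it avoids the paper's trace/singular-value bookkeeping (including the slightly delicate step where $D_x^4=D_m^2$ ``up to permutation'' is used to conclude $\Tr(D_m)=\Tr(D_x^2)$), and it makes the logical dependence on the Gram-matrix invariance completely transparent. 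What the paper's approach buys in exchange is a fully explicit closed-form $R$, with no arbitrary choice of orthonormal bases for the complements; for the purposes of Theorem~\ref{t:FDR}, however, only the existence of some orthogonal $R$ is needed, so either proof serves equally well.
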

\begin{proof}[Proof of Lemma~\ref{l:Rswap}]
Define the square matrix $M:=[X\, \tilde X]_{\swap(S)}[X\, \tilde X]^{\top}$, and consider its full singular value decomposition $M=U_mD_mV_m^{\top}$, with $D_m$ being a square diagonal matrix of singular values. We show that an orthogonal matrix $R=U_mV_m^{\top}$ satisfies the conditions of the lemma: $R[X\, \tilde X] - [X\, \tilde X]_{\swap(S)} = 0$.

Consider the Frobenius norm 
\begin{align*}
\|R&[X\,\tilde X]-[X\,\tilde X]_{\swap(S)}\|^2_F\\
&=\Tr\left([X\,\tilde X]^{\top}R^{\top}R[X\,\tilde X]+[X\,\tilde X]_{\swap(S)}^{\top}[X\,\tilde X]_{\swap(S)}-2R^{\top}[X\, \tilde X]_{\swap(S)}[X\, \tilde X]^{\top}\right)\\
&=\Tr(2[X\,\tilde X]^{\top}[X\,\tilde X]-2 V_mU_m^{\top}U_mD_mV_m^{\top})\\
&=\Tr(2[X\,\tilde X]^{\top}[X\,\tilde X]-2 D_m),
\end{align*}
where in the second equation we used the definition of $R$, and the invariance of the Gram matrix $[X\,\tilde X]^{\top}[X\,\tilde X]$ under the $\swap(S)$ operation. From the above display, the Frobenius norm is zero iff $\Tr([X\,\tilde X]^{\top}[X\,\tilde X])=\Tr(D_m)$. We conclude the proof by showing that the latter inequality holds.

Consider the full singular value decomposition of $[X\,\tilde X] = U_xD_xV_x^{\top}$. Using the definition of matrix $M$, and the invariance of Gram matrix $[X\,\tilde X]^{\top}[X\,\tilde X]$ under the $\swap(S)$ operation,
\begin{align*}
M^{\top}M&=[X\, \tilde X][X\, \tilde X]_{\swap(S)}^{\top}[X\, \tilde X]_{\swap(S)}[X\, \tilde X]^{\top} \\
&= [X\, \tilde X][X\, \tilde X]^{\top}[X\, \tilde X][X\, \tilde X]^{\top}\\
&= U_xD_xV_x^{\top}V_xD_xU_x^{\top}U_xD_xV_x^{\top}V_xD_xU_x^{\top}\\
&= U_xD_x^4U_x^{\top}.
\end{align*}
On the other hand, using the full singular value decomposition of $M$,
$$
M^{\top}M=V_mD_m^2V_m^{\top}.
$$
It follows that $V_mD_m^2V_m^{\top}=U_xD_x^4U_x$, and both equations can be viewed as eigendecomposition of $M^{\top}M$. Hence, $D_x^{4}=D_m^2$ up to the permutation, and subsequently $\Tr(D_m) = \Tr(D_x^2)$. Since $\Tr([X\,\tilde X]^{\top}[X\,\tilde X])=\Tr(D_x^2)=\Tr(D_m)$, this concludes the proof.
\end{proof}

\begin{proof}[Proof of Theorem~4.1]
In \citet{Barber2015}, the sufficiency property is only used in the proof of Lemma~1, where it is shown that for any subset $S$ of null features $N = \left\{j\in\{1,\dots,p\}: \beta_j = 0\right\}$,
$$
W_{\swap(S)}\stackrel{d}{=}W,
$$
where $\stackrel{d}{=}$ means equality in distribution and $\swap(S)$ swaps the columns $X_j$ and $\tilde X_j$ in $[X\, \tilde X]$ for each $j\in S$. Further we show that this also holds for generalized sufficiency property.

By Lemma~\ref{l:Rswap}, there exists an orthogonal matrix $R\in\R^{n\times n}$ such that
\begin{equation}\label{eq:Rswap}
R[X\, \tilde X] = [X\, \tilde X]_{\swap(S)}.
\end{equation}
Therefore, 
\begin{align*}
W_{\swap(S)}& =f([X\, \tilde X]_{\swap(S)}^{\top}[X\, \tilde X]_{\swap(S)}, [X\, \tilde X]_{\swap(S)}^{\top}Y, \|Y\|_2)\\
& =f([X\, \tilde X]^{\top}[X\, \tilde X], [X\, \tilde X]^{\top}R^{\top}Y, \|R^{\top}Y\|_2)\\
& = f([X\, \tilde X]^{\top}[X\, \tilde X], [X\, \tilde X]^{\top}Y', \|Y'\|_2)
\end{align*}
with $Y' :=R^{\top} Y$. Hence, $W_{\swap(S)}$ can be viewed as $W$ applied to the same $[X\, \tilde X]$ and modified response $Y'$. Since $[X\, \tilde X]$ is fixed, it follows from above that to show $W_{\swap(S)}\stackrel{d}{=}W$, it is sufficient to show $Y\stackrel{d}{=} Y'$.

Since $Y\sim \Ncal(X\beta,\sigma^2I)$, 
$$
Y'=R^{\top}Y\sim \Ncal(R^{\top}X\beta, \sigma^2 R^{\top}R)=\Ncal(R^{\top}X\beta, \sigma^2 I),
$$
where we used the orthogonality of matrix $R$. From above, to show $Y\stackrel{d}{=} Y'$, it remains to show $R^{\top}X\beta = X\beta$.

Denote $[X\, \tilde X]_{\swap(S)} = [X_{\swap(S)}\, \tilde X_{\swap(S)}]$. Since $S$ only contains null features, $X\beta = X_{\swap(S)}\beta$, hence
$$
R^{\top}X\beta = R^{\top}X_{\swap(S)}\beta.
$$
On the other hand,~\eqref{eq:Rswap} implies $RX = X_{\swap(S)}$. Performing left multiplication by $R^{\top}$ on both sides gives $X = R^{\top}X_{\swap(S)}$, and subsequently $X\beta = R^{\top}X_{\swap(S)}\beta$. Combining this with the above display gives
$
R^{\top}X\beta = X\beta
$
completing the proof.
\end{proof}

\section{Further analysis of TREX on the B. subtilis data}
\label{sec:Bsubtilis}
This section provides additional information about the solution quality of the c-TREX globally optimal and the q-TREX solution. Figure~\ref{fig:suppl1} shows key results. We first report the correlation structure of the selected features for the q-TREX and c-TREX solutions, along with the corresponding B. subtilis gene names. We observe two blocks of correlated genes: the set XHLB, XKDS, XLYA, and XTRA, and the pair YDDK, YDDM, (Figure~\ref{fig:suppl1}, top panels), suggesting that the set of genes might be further reducible, e.g., via empirical Bayes approaches \citep{Bar2015}. We also show least-squares refits on the support of the q-TREX and c-TREX solution to the measured  log-production rate (Figure~\ref{fig:suppl1}, lower left panel). 
\begin{figure}[H] 
    \centering
  \includegraphics[width=0.99\linewidth]{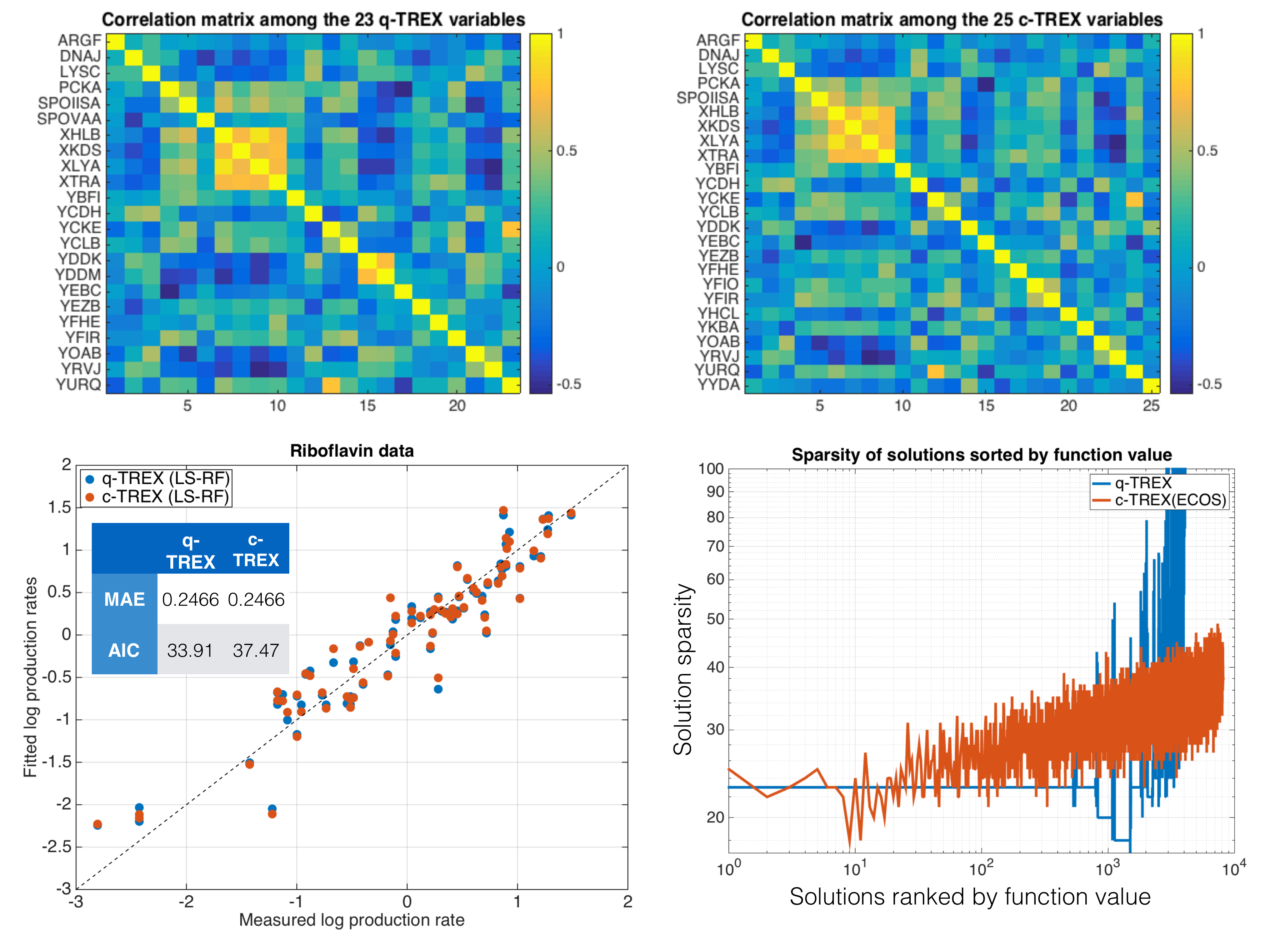}
  \caption{Top panels: Correlation matrix of the features (genes) selected by q-TREX (left panel) and c-TREX (right panel); Lower left panel: Fitted log-production rate vs. measured riboflavin log-production rate. The table insert shows the mean absolute error (MAE) and the Akaike Information Criterion (AIC) for the q-TREX and c-TREX solutions; Lower right panel: Sparsity of all q-TREX restart solutions and c-TREX subproblems vs. ranking of the solution in terms of TREX function value.}
  \label{fig:suppl1}
\end{figure}

While both solutions yield similar mean absolute error (MAE), the AIC of the q-TREX compares favorably to that of the c-TREX global minimum due to lower model complexity. In terms of AIC, both solutions improve upon solutions found by Lasso with stability selection \citep{Buhlmann2014} and an empirical Bayes approach \citep{Bar2015} and are comparable in terms of MAE. For completeness, we also report the sparsity of all found solutions vs. their ranking in terms of function values (Figure~\ref{fig:suppl1}, lower right panel). We observe that more than the top $20\%$ of the q-TREX solutions have sparsity $s=23$ whereas the sparsity of the top $20\%$ of the c-TREX solutions vary between 18 and 40.

\end{appendix}

\end{document}